\documentclass{article}

\usepackage{arxiv}

\usepackage[utf8]{inputenc}
\usepackage[T1]{fontenc}
\usepackage[round]{natbib}
\usepackage{acro}
\usepackage{algorithm}
\acsetup{format/first-long=}
\acsetup{barriers/use=true}
\acsetup{barriers/reset=true}
\acsetup{single}
\newcommand{\dac}[3]{\DeclareAcronym{#1}{short = #2, long = #3}}
\dac{auroc}{AUROC}{area under the receiver operating characteristic curve}
\dac{cdf}{CDF}{cumulative distribution function}
\dac{cvar}{CVaR}{conditional value-at-risk}
\dac{dro}{DRO}{distributionally robust optimization}
\dac{id}{iD}{in-distribution}
\dac{ml}{ML}{machine learning}
\dac{ood}{OoD}{out-of-distribution}
\dac{var}{VaR}{value-at-risk}
\usepackage{algpseudocode}

\usepackage{amsfonts}
\usepackage{amsmath}
\usepackage{amsthm}
\usepackage{bm}
\usepackage{booktabs}
\usepackage{caption}
\PassOptionsToPackage{hyphens}{url}
\usepackage{hyperref}
\hypersetup{colorlinks=true, citecolor=blue!50!black, linkcolor=red!50!black, urlcolor=green!50!black}
\usepackage[capitalize,noabbrev]{cleveref}
\crefformat{equation}{(#2#1#3)}
\crefrangeformat{equation}{(#3#1#4--#5#2#6)}
\crefmultiformat{equation}{(#2#1#3)}{ and (#2#1#3)}
{, (#2#1#3)}{ and (#2#1#3)}
\usepackage{graphicx}
\usepackage{mathtools}
\usepackage{pifont}
\usepackage{stmaryrd}
\usepackage{subcaption}
\usepackage{url}
\usepackage{xcolor}

\newtheorem{example}{Example}[section]
\newtheorem{proposition}{Proposition}[section]

\newcommand{\bell}{\bar{\ell}}
\newcommand{\tp}{\tilde{p}}

\newcommand{\vx}{\bm{x}}
\newcommand{\vz}{\bm{z}}
\newcommand{\cvar}{\operatorname{CVaR}}
\newcommand{\var}{\operatorname{VaR}}
\newcommand{\softmax}{\operatorname{softmax}}
\newcommand{\bE}{\mathbb{E}}
\newcommand{\bR}{\mathbb{R}}
\newcommand{\cC}{\mathcal{C}}
\newcommand{\cD}{\mathcal{D}}
\newcommand{\cH}{\mathcal{H}}

\newcommand{\cQ}{\mathcal{Q}}
\newcommand{\cT}{\mathcal{T}}
\newcommand{\cV}{\mathcal{V}}
\newcommand{\cX}{\mathcal{X}}
\newcommand{\cY}{\mathcal{Y}}

\newcommand{\hh}{\hat{h}}
\newcommand{\lowp}{\underline{p}}
\newcommand{\uppp}{\overline{p}}

\DeclareMathOperator*{\argmax}{arg\,max}
\DeclareMathOperator*{\argmin}{arg\,min}
\newcommand{\onecirc}{\ding{172}}
\newcommand{\twocirc}{\ding{173}}
\newcommand{\threecirc}{\ding{174}}
\newcommand{\fourcirc}{\ding{175}}
\newcommand{\plus}{\kern0.18em\raisebox{0.2ex}{\scriptsize$+$}\kern0.18em}

\title{Credal Machine Learning for \\ Risk-Averse Decision Making}

\author{
    Timo Löhr\thanks{Equal contribution.} \\ LMU Munich, MCML \\ \texttt{timo.loehr@ifi.lmu.de}
    \And
    Paul Hofman\footnotemark[1] \\ LMU Munich, MCML \\ \texttt{paul.hofman@ifi.lmu.de}
    \AND
    Maximilian Muschalik \\ LMU Munich, MCML \\ \texttt{maximilian.muschalik@lmu.de}
    \And
    Eyke Hüllermeier \\ LMU Munich, MCML, DFKI \\ \texttt{eyke@lmu.de}
}

\renewcommand{\headeright}{A Preprint}
\renewcommand{\undertitle}{A Preprint}
\renewcommand{\shorttitle}{Credal Machine Learning for Risk-Averse Decision Making}

\hypersetup{
    pdftitle={Credal Machine Learning for Risk-Averse Decision Making},
    pdfauthor={Timo Löhr, Paul Hofman, Maximilian Muschalik, Eyke Hüllermeier},
}

\begin{document}
    \maketitle

    \begin{abstract}
        In many machine learning applications, it is necessary to guard against worst-case scenarios and predictions that could result in substantial losses. In principle, this can be achieved by training risk-averse predictive models that minimize loss functions such as \ac{cvar}, rather than relying on models that perform well on average. In practice, however, the effectiveness of this approach to risk aversion is undermined by the learner's uncertainty regarding the true loss distribution and, consequently, the true \ac{cvar}. To achieve reliable risk-aversion, we propose a method in which this (epistemic) uncertainty is represented in terms of credal sets, i.e., sets of probability distributions. More specifically, we develop an efficient yet reliable learner that produces predictions in the form of credal sets and combine it with a novel decision rule that maps each credal set to a single predictive distribution for \ac{cvar} minimization. Across classification, under distribution shift, and in reinforcement learning, our approach reliably avoids catastrophic decisions, while sacrificing little in expected performance.
    \end{abstract}


    \acresetall

    \suppressfloats[t]
    \begin{figure}[t]
        \centering
        \includegraphics[width=0.5\linewidth]{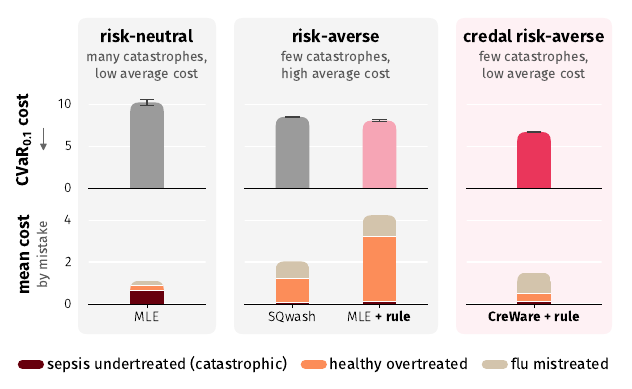}
        \caption{Risk-averse decision making. Exemplary medical setting where our method shows lower \acl{cvar} and avoids catastrophic decisions.}
        \label{fig:cost-sensitive-fig1}
    \end{figure}

    \addtocontents{toc}{\protect\setcounter{tocdepth}{0}}

    \section{Introduction}
    \Ac{ml} models deployed in safety-critical domains are typically judged by their average performance, yet an average is rarely a sufficient criterion of quality. A treatment policy in medicine, for example, must not merely be effective in expectation over patients, but also guarantee that no individual patient receives care which leads to a worst-case outcome (cf. \cref{fig:cost-sensitive-fig1}). What is needed, in other words, is a form of \textit{risk aversion}, a preference for predictions and decisions whose worst outcomes remain controlled.

    A standard way to operationalize this preference is the \ac{cvar}, which, broadly speaking, averages the losses in the upper tail of the loss distribution \citep{rockafellarOptimizationCVaR2000}. Optimal prediction under this criterion requires a transformation of the ground-truth distribution that shifts probability mass towards severe outcomes, thereby correcting for over-confidence. Yet such a transformation requires knowledge about the ground-truth distribution which is rarely available in practice. Training data may be scarce, and deployment conditions may differ from those at training time, leaving the learner epistemically uncertain about the true data generating distribution. Existing approaches to risk-averse learning do not account for such epistemic uncertainty, and may therefore fail exactly where the model knows least \citep{chowCVaROptimization2015, curiAdaptiveSampling2020}.

    Epistemic uncertainty is naturally represented by credal sets, namely sets of probability distributions which are plausible under the available evidence. Several methods for credal prediction have recently been proposed in machine learning \citep{wangCredalWrapper2025,lohrCredal2025,hofmanEfficientCredalPrediction2026}. These methods face a trade-off between expressiveness and cost. Approaches that reflect epistemic uncertainty faithfully typically rely on training an ensemble of models, whereas computationally cheaper alternatives are limited in the construction of the credal set.

    The present paper exploits the complementarity of credal prediction and \ac{cvar} minimization for risk-averse decision making: credal sets account for epistemic uncertainty about the ground truth distribution, whereas \ac{cvar} guards against worst-case outcomes once a distribution is fixed. Neither suffices alone. Trusting a precise loss distribution is dangerous when it may be misspecified, and focusing on the mean while disregarding the tail exposes a decision maker to precisely the outcomes it seeks to avoid. Building on this analysis, we develop a credal predictor that faithfully reflects epistemic uncertainty while requiring only a single forward pass, and complement it with a risk-averse decision rule that maps credal sets to single distributions for \ac{cvar} minimization.

    In summary, our \textbf{contributions} are as follows:
    \onecirc\ We establish a formal connection between credal prediction and \ac{cvar} minimization, arguing that their combination is essential for risk-averse decision making under both known and unknown risk.
    \twocirc\ We introduce a new method for credal prediction producing expressive credal sets at low computational cost.
    \threecirc\ We propose a novel risk-averse decision rule that maps a credal set to a single predictive distribution for risk-averse decision making.
    \fourcirc\ In experiments on classification, under distribution shift, and in reinforcement learning, our approach reliably avoids catastrophic decisions while sacrificing little in expected performance.

    \section{Risk-Averse ML}\label{sec:risk-averse}
    We consider an \Ac{ml} setting with supervision in the form of (training) data $\cD_{\rm{train}} = \{(\vx_i, y_i)\}_{i=1}^N \subset (\cX \times \cY)$, where $\cX \subseteq \bR^d$ is the instance space and $\cY = \{1, \ldots, K\}$ a (discrete) target space with $K$ classes. The data are assumed to be realizations of the random variables $(X_i, Y_i)$, which are independent and identically distributed (i.i.d.) according to a probability measure $P$ on $(\cX, \cY)$. We further denote by $P_X$ the marginal distribution on $\cX$. We consider a hypothesis space of probabilistic predictors $\cH = \{h : \cX \to \Delta_K\}$, where $\Delta_K$ denotes the $(K-1)$-simplex, which we will also refer to as the probability simplex. Given an instance $\vx$, the predictor $h$ thus predicts probability distributions $p(\cdot \mid \vx, h) = h(\vx)$, which are estimates of the ground-truth conditional $p^*(\cdot \mid \vx)$. For brevity, we will also write $p$ and $p^*$, respectively, whenever the conditioning is clear from the context.

    \Ac{ml} models are usually evaluated in terms of expected risk $\bE_{(X, Y) \sim P}\left[\ell(h(X), Y)\right]$, where $\ell$ is a proper scoring rule \citep{gneitingProperScoringRules2007} such as log-loss (cross-entropy) or 0/1-loss. However, when it comes to safety-critical applications such as medical decision support and autonomous driving, it is not enough to merely consider average performance. From an operational perspective, a model with a low mean error but a loss distribution with a heavy upper tail is unsafe.

    \subsection{Conditional-Value-at-Risk}
    A natural idea is to replace the expectation by a different aggregation of the loss distribution, which places more emphasis on critical outcomes. A canonical choice is the \ac{cvar} at a level $\tau \in (0,1]$, which, roughly speaking, is the mean of the upper $\tau$-tail of the loss distribution. The same functional is also known as expected shortfall \citep{acerbiExpectedShortfall2002}, average value-at-risk \citep{shapiroStochasticProgramming2014}, or superquantile \citep{laguelSuperquantiles2021}. It is defined as
    \begin{equation}\label{eq:cvar}
        \cvar_\tau(L) = \frac{1}{\tau} \int_0^\tau \var_u(L)\ du \, ,
    \end{equation}
    where $\var_u(L) = \inf_{t \in \mathbb{R}} \{t : F_L(t) \geq 1 - u\}$ is the \ac{var}, and $L = \ell(h(X), Y)$. Following \citet{rockafellarOptimizationCVaR2000}, it can also be written as
    \begin{equation}\label{eq:variational-cvar}
        \cvar_\tau(L) = \min_{v \in \bR} \left\{v + \frac{1}{\tau}\bE_P\left[(L - v)_+ \right] \right\} \, .
    \end{equation}
    Note that the minimum in \cref{eq:variational-cvar} is attained at $v^* = \var_\tau(L)$. The essential supremum is recovered by $\tau \to 0$ and the expected risk by $\tau \to 1$. The \ac{cvar} is a coherent risk measure in the sense of \citet{artznerCoherentRiskMeasures1999} and is part of a larger family of convex risk measures \citep{foellmerConvexRiskMeasures2002}, which also contains the entropic risk measure
    \citep{foellmer2011entropic}.

    The \ac{cvar} has become a common metric in risk-averse \ac{ml}, appearing in robust variants of support vector machines \citep{takedaCVaRLearning2009}, optimization of risk-averse learning objectives \citep{levyLargeScaleDRO2020,curiAdaptiveSampling2020}, reinforcement learning \citep{rigterBayesAdaptiveRL2021,greenbergRiskAverseRL2022}, and the design of fairness-aware risk measures \citep{williamsonFairnessRiskMeasures2019}. The functional in \cref{eq:cvar} is a population quantity. Empirically, it can be estimated by its plug-in on a sample of $N$ losses. Writing $L^{(1)} \geq \ldots \geq L^{(N)}$ for the ordered losses, the estimate is
    \begin{equation}\label{eq:cvar-empirical}
        \widehat{\cvar}_\tau(L) \coloneqq \frac{1}{\lceil \tau N \rceil} \sum_{i=1}^{\lceil \tau N \rceil} L^{(i)},
    \end{equation}
    that is, the average of the $\lceil \tau N \rceil$ largest losses. For $\tau = \frac{k}{N}$, this is precisely the (average) top-$k$ loss of \citet{fanLearningTopkLoss2017}. Also, statistical guarantees for learning under this criterion such as PAC-Bayesian bounds have been developed \citep{duchiLearningModelsDRO2018,mhammediPACBayesianCVaR2020,leeLearningBoundsRiskSensitive2020,somaStatisticalLearningCVaR2020}.
    \cref{app:variants} discusses variants of the \ac{cvar}.

    \subsection{Minimizing the \ac{cvar}}\label{sec:minimizing-cvar}
    The goal is thus to find a predictor that minimizes \cref{eq:cvar}. Unlike the expected loss, this problem does not decompose additively over instance-wise problems $\min_{p \in \Delta_K} \bE_{Y \sim p^*}\left[\ell(p,Y)\right]$, because \ac{cvar} is not an expectation of an instance-wise loss. Instead, it is a functional of the entire loss distribution. Fortunately, the variational form \cref{eq:variational-cvar} restores this decomposition. By interchanging the minimization over $h$ with one over $v$ such that
    \begin{align*}
        &\min_{h \in \cH} \cvar_\tau\left(\ell(h(X), Y)\right) = \\
        &\min_{v \in \bR} \left\{v + \frac{1}{\tau} \bE_{P_X} \left[\min_{p \in \Delta_K} \bE_{Y \sim p^*} \left[(\ell(p, Y) - v)_+ \right]\right]\right\},
    \end{align*}
    the objective (at fixed threshold $v$) becomes an expectation over the hinged loss $(\ell(p, Y) - v)_+$ and thus decomposes into independent instance-wise problems, which are coupled only through $v$.
    Given an instance $\vx$, this amounts to finding the distribution
    \begin{equation}\label{eq:cvar-distribution}
        \tilde{p}(\cdot \mid \vx) = \argmin_{p \in \Delta_K}\bE_{Y \sim p^*}\left[(\ell(p,Y) - v^*)_+\right].
    \end{equation}
    From \cref{eq:cvar-distribution}, the tension between the expected risk and the \ac{cvar} becomes immediately obvious.
    \begin{proposition} \label{prop:tension}
        Let $p^*\in \Delta_K$ be the ground-truth distribution for instance $\vx$ and $\ell : \Delta_K \times \cY \to \overline{\bR}$ a strictly proper scoring rule, so that, by propriety of $\ell$,
        \begin{equation*}
            p^* = \underset{p \in \Delta_K}{\argmin}\ \bE_{Y \sim p^*}[\ell(p, Y)] \, .
        \end{equation*}
        Given a threshold $v^*$, define
        \begin{equation*}
            \cT_{v^*} = \{y \in \cY : \ell(p^*, y) > v^*\} \, .
        \end{equation*}
        Then, if $(\ell, p^*, v^*)$ is tail non-degenerate, meaning that
        $0 < p^*(\cT_{v^*}) < 1$,
        and satisfies the regularity conditions stated in \cref{app:proof-minimizers},
        \begin{equation*}
            p^* \notin \underset{p \in \Delta_K}{\argmin}\  \bE_{Y \sim p^*}\left[(\ell(p,Y) - v^*)_+\right].
        \end{equation*}
    \end{proposition}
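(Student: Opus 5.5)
We show that $p^*$ is not a minimizer of $G(p) \coloneqq \bE_{Y \sim p^*}\left[(\ell(p,Y) - v^*)_+\right]$ by producing a feasible direction $d$ in the tangent space of $\Delta_K$ along which $G$ strictly decreases. The regularity conditions we expect to rely on are:
\begin{itemize}
\item[(i)] $\ell(\cdot, y)$ is finite and differentiable in a neighbourhood of $p^*$ relative to $\Delta_K$, for every $y$;
\item[(ii)] no $y$ with $p^*(y) > 0$ has $\ell(p^*, y) = v^*$ exactly, so the hinge is locally smooth at $p^*$.
\end{itemize}
Under (ii), for $p$ close to $p^*$ the set $\cT_{v^*}$ is unchanged. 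Hence, locally,
\begin{equation*}
G(p) = \sum_{y \in \cT_{v^*}} p^*(y)\bigl(\ell(p,y) - v^*\bigr),
\end{equation*}
and the directional derivative is
\begin{equation*}
G'(p^*; d) = \sum_{y \in \cT_{v^*}} p^*(y)\, \nabla_p \ell(p^*, y)^\top d .
\end{equation*}

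The key input is the first-order condition from strict propriety. Since $p^*$ minimizes $S(p) = \sum_y p^*(y)\ell(p,y)$ over $\Delta_K$, and assuming $p^*$ lies in the relative interior of the simplex (or restricting to its support face otherwise), we have
\begin{equation*}
\sum_{y} p^*(y)\, \nabla_p \ell(p^*, y)^\top d = 0
\end{equation*}
for all $d$ with $\sum_k d_k = 0$. This is the standard stationarity condition $\sum_y p^*(y)\nabla\ell(p^*,y) \propto \mathbf{1}$. Write $g_T = \sum_{y \in \cT} p^*(y)\nabla\ell(p^*,y)$ and $g_{T^c}$ analogously, so that $g_T + g_{T^c} \propto \mathbf 1$. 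It therefore suffices to show that $g_T$ is not itself proportional to $\mathbf 1$ on the tangent space: then some $d$ gives $g_T^\top d < 0$, so $G'(p^*; d) < 0$ and $p^*$ is not a minimizer.

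The main obstacle is ruling out $g_T \propto \mathbf 1$, equivalently $g_{T^c} \propto \mathbf 1$. We take the direction $d = \delta_{T} - \delta_{T^c}$, where $\delta_T$ puts total mass $1$ uniformly (or $p^*$-proportionally) on $\cT_{v^*}$ and $\delta_{T^c}$ does the same on the complement. This $d$ shifts mass toward the tail labels and is feasible because $0 < p^*(\cT_{v^*}) < 1$. Intuitively, for a strictly proper score, moving mass onto $y$ strictly lowers $\ell(\cdot, y)$ for $y \in \cT$, as in the log-loss case with $\partial \ell(p,y)/\partial p_y < 0$. We therefore want $g_T^\top d < 0$.

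For log-loss this is a direct computation. Here $\nabla \ell(p^*, y) = -e_y/p^*(y)$, so $g_T = -\mathbf 1_T$ and $g_T^\top d = -\sum_{y\in T} d_y = -1 < 0$. For a general strictly proper score, we would use the Savage/Gneiting representation
\begin{equation*}
\ell(p,y) = -H(p) - \langle \nabla H(p), e_y - p\rangle ,
\end{equation*}
with $H$ strictly concave. This gives
\begin{equation*}
\nabla_p \ell(p^*,y) = -\nabla^2 H(p^*)(e_y - p^*),
\end{equation*}
so
\begin{equation*}
g_T^\top d = -\bigl(\textstyle\sum_{y\in T} p^*(y)(e_y - p^*)\bigr)^\top \nabla^2H(p^*)\, d .
\end{equation*}
Now $\sum_{y\in T}p^*(y)(e_y - p^*) = p^*|_T - p^*(T)p^*$ is a nonzero tangent vector, nonzero precisely because $0<p^*(T)<1$. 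Choosing $d$ equal to this vector yields $-u^\top \nabla^2 H\, u > 0$ with the wrong sign. The correct choice is $d = -u$ flipped appropriately, namely $d = u$ with the sign arranged so that $g_T^\top d = u^\top \nabla^2 H(p^*)\, u$; wait, this must be checked carefully. With $d = -u$, we get $g_T^\top d = u^\top \nabla^2 H\, u < 0$ by strict concavity restricted to the tangent space, which is part of the regularity assumptions.

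Hence $G$ strictly decreases along $d$, and $p^*$ is not a minimizer.
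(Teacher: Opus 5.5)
Your log-loss computation is correct, but the general-score argument has two problems. First, there is a sign error. For a loss with strictly concave generalized entropy $H$, the Savage representation is $\ell(p,y) = H(p) + \langle \nabla H(p), e_y - p\rangle$. With your minus signs and concave $H$, $\bE_{Y\sim q}[\ell(p,Y)]$ is \emph{maximized} at $p=q$, so you are describing a positively oriented score rather than a loss. With the correct sign, $\nabla_p \ell(p^*,y) = \nabla^2 H(p^*)(e_y - p^*)$. Hence $g_T = \nabla^2 H(p^*)\,u$ with $u = p^*(\cT_{v^*})\,(q_{\cT} - p^*)$, and the descent direction is $d = +u$, i.e.\ towards the tail. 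This is exactly the direction the paper uses. Your final choice $d=-u$ moves mass away from the tail. For log-loss it gives $g_T^\top(-u) = p^*(\cT_{v^*})\,(1-p^*(\cT_{v^*})) > 0$, which contradicts your own computation.

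Second, and more seriously, even with the sign fixed, the strict inequality $u^\top \nabla^2 H(p^*)\,u < 0$ does not follow from strict propriety. Strict concavity of $H$ only gives $u^\top \nabla^2 H(p^*)\,u \le 0$. Negative definiteness of the Hessian on the tangent space is not among (R1)--(R3), and neither is twice differentiability of $H$ or differentiability of $\ell(\cdot,y)$. Besides full support, finiteness and $\ell(p^*,y)\neq v^*$, those conditions ask only for continuity of $\ell(\cdot,y)$ at $p^*$. This is a real gap, not a technicality. Take $H(p) = \langle a, p\rangle - \lVert p - p^*\rVert^4$ with $a$ non-constant and $p^*$ in the relative interior. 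Then $H$ is strictly concave, and the induced loss is a polynomial with $\ell(p^*,y) = a_y$. So (R1)--(R3) and tail non-degeneracy hold for any $v^* \in (\min_y a_y, \max_y a_y)$ different from all $a_y$. Yet $\nabla^2 H(p^*) = 0$, so $G'(p^*;d) = 0$ for every tangent direction $d$ and no first-order test can conclude. The proposition is still true there, since along $p_t$ the objective drops at order $t^3$.

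The paper avoids derivatives altogether. On the segment $p_t = (1-t)p^* + t\,q_{\cT}$, continuity and (R3) give $J(p_t) = p^*(\cT_{v^*})\,[\bell(p_t,q_{\cT}) - v^*]$ for small $t$ (your $G$). Strict propriety alone then shows that $\varphi(t) = \bell(p_t,q_{\cT})$ is strictly decreasing. It uses the affine supporting lines $s \mapsto \bell(p_t,p_s)$ of the concave function $\psi(t) = \bell(p_t,p_t)$ and the monotonicity of their slopes. This strict monotonicity along the whole segment, rather than a strictly negative slope at $t=0$, is the idea your argument is missing.
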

    \noindent
    In other words, the hinge in \cref{eq:cvar-distribution} destroys the propriety and causes losses below the threshold $v$ to not matter. Therefore, the minimizer shifts probability mass from likely classes to classes in the tail set $\cT_{v^*}$. This effect is strongest for highly peaked distributions, where a class with low probability is expensive. If the losses for all classes fall below the threshold, $p^*$ remains optimal.
    We provide a proof of Proposition \ref{prop:tension} in \cref{app:proof-minimizers}.

    \begin{example}[Log-loss]
Consider the binary setting, i.e., $K=2$, and let the ground-truth distribution be $p^*(\cdot \mid \vx) = (0.8, 0.2)$. For a threshold $v^* = \frac{1}{2}$,
\begin{align*}
    \underset{p \in \Delta_K}{\argmin}\ & \bE_{Y \sim p^*} \left(\ell_{\rm{log}}(p,Y) - 1/2\right)_+  = \\
    & = \left(e^{-\frac{1}{2}}, 1-e^{-\frac{1}{2}}\right) \approx \left(0.61, 0.39\right).
\end{align*}
Further examples can be found in \cref{app:examples}.
    \end{example}

    Importantly, the minimizer in \cref{eq:cvar-distribution} requires the ground-truth conditional distribution $p^*(\cdot \mid \vx)$, and (through $v^*$) the joint distribution $P$, neither of which is available in practice. Estimates are prone to error precisely in those regions where the objective is sensitive, namely in the tail of the loss distribution. This calls for caution when computing \ac{cvar}-optimal distributions.

    \section{Credal Prediction}
    \label{sec:credal}
    Committing to a single estimate of $p^*(\cdot \mid \vx)$ in \cref{eq:cvar-distribution} treats it as the truth --- ignoring the learner's \emph{epistemic} uncertainty about its predictions $p(\cdot \mid \vx)$.
    Since a single probability distribution cannot express this uncertainty, a second-order formalism is required.

    \subsection{Credal Sets from Relative Likelihood}

    Epistemic uncertainty can be represented by \emph{credal sets}, that is, (closed and convex) sets of probability distributions \citep{leviCredal1978,walleyCredal1991,augustin2014introduction}. In addition to representing partial knowledge, such sets are the natural object for decision making under ambiguity: robust Bayesian decision theory evaluates against precisely such sets of distributions \citep{gruenwaldGameTheory2004,berger2013statistical}. Given an instance $\vx$, credal sets may be induced by class-wise probability intervals \citep{camposProbabilityIntervals1994}
    \begin{equation*}
        \cQ_{\vx} = \Big\{ p : \lowp(k \mid \vx) \leq p(k \mid \vx) \leq \uppp(k \mid \vx) \Big\} \, ,
    \end{equation*}
    specified by lower and upper probabilities $\lowp$ and $\uppp$.

    The construction of $\cQ_{\vx}$ should strictly reflect the evidence in the training data, specifically in settings with little data or distribution shifts in test environments. To this end, we follow an approach that has recently been advocated by \citep{lohrCredal2025,hofmanEfficientCredalPrediction2026}. This approach is based on the notion of \emph{relative likelihood}, a well-established concept in statistical inference \citep{wassermanBelief1990}.
    With the maximum likelihood estimate $\hh = \argmax_{h \in \cH} \prod_{i=1}^N p(y_i \mid \vx_i, h)$,
    the \emph{relative likelihood} of a model $h \in \cH$ is defined as
    \begin{equation}\label{eq:lr}
        \Lambda(h) = \frac{\prod_{i=1}^N p(y_i \mid \vx_i, h)}{\prod_{i=1}^N p(y_i \mid \vx_i, \hh)} \, .
    \end{equation}
    Thus, a model is deemed plausible to the degree to which it explains the data as well as the maximum likelihood estimate, which is a classical measure of evidence in the likelihood paradigm \citep{royall2017statistical}.
    For a fixed plausibility level $\alpha \in (0,1]$, the $\alpha$-cut of the hypothesis space,
    $\mathcal{C}_\alpha = \{ h \in \mathcal{H} : \Lambda(h) \geq \alpha \}$, is known as \emph{likelihood region} in statistics.
    According to Wilks' theorem \citep{wilksLikelihood1938}, stating that, under regularity conditions, $-2$ times the log of the likelihood ratio in \cref{eq:lr}, evaluated at the ground-truth model, is asymptotically $\chi^2$-distributed, $\cC_\alpha$ can be seen as a confidence region for the ground-truth model.
    This region induces the class-wise probability bounds
    \begin{align}\label{eq:credal-bounds}
        \begin{split}
            \lowp(k \mid \vx) &= \inf_{h \in \mathcal{C}_\alpha} p(k \mid \vx, h) \\
            \uppp(k \mid \vx) &= \sup_{h \in \mathcal{C}_\alpha} p(k \mid \vx, h).
        \end{split}
    \end{align}
    The plausible model class $\mathcal{C}_\alpha$, and hence the induced probability intervals (\ref{eq:credal-bounds}), are nested in $\alpha$, shrinking from the entire probability simplex ($\alpha \downarrow 0$) to only the maximum likelihood predictions ($\alpha=1$).

    \subsection{Learning and Approximation}
    Since computing the bounds \cref{eq:credal-bounds} over the full hypothesis space is infeasible, any practical credal predictor must approximate them --- specifically in the case of neural networks, which we will focus on for the remainder of this work. Ideally, such an approximation is efficient, since the hypothesis space becomes increasingly high-dimensional with the scale of current models, and even an approximate search for plausible models grows correspondingly costly.
    It should be faithful to the evidence: the \ac{cvar} objective \cref{eq:cvar-distribution} is driven by the probabilities of the tail classes, precisely the small probabilities for which point estimates are least reliable, so the bounds should be attainable by actually plausible models. Lastly, the sets should be sensitive to changes in the shifts of the test distribution and \acl{ood} instances.

    To meet these requirements, we assess plausibility locally rather than globally. Instead of approximating $\mathcal{C}_\alpha$ over the hypothesis space $\mathcal{H}$ consisting of all neural networks with a particular architecture, we consider the \emph{local likelihood} of the test instance $\vx$. The hypotheses are then no longer models, but conditional distributions at $\vx$, that is, $\cH_{\vx} = \Delta_K$. Moreover, plausibility is judged by how well a candidate $p \in \cH_{\vx}$ explains the training data in the proximity of $\vx$, such that the resulting sets adapt, by construction, to test instances that are shifted or come from a different distribution altogether. Local likelihood estimation is a classical technique in statistics \citep{tibshiraniLocalLikelihood1987,loaderLocalLikelihood2006}, and its combination with the relative likelihood has been explored in machine learning \citep{sengeReliableClassification2014}, albeit for simpler models.

    To be efficient, we instantiate this idea using the feature embeddings of a trained network: given $h(\vx) = p(\cdot \mid \vx, h) = \softmax(g(f(\vx)))$ with feature extractor $f$ and linear head $g$, we work with the training embeddings $\vz_i = f(\vx_i)$, in which locality is cheap to evaluate. This parallels \citet{shakerAleatoricEpistemicRandomForests2020}, who assess local regions of the feature space through leaves of decision trees. In our case, these regions are considered in the learned embedding space.

    Concretely, embeddings are compared by a kernel, as is common when working with learned representations \citep{wilsonDeepKernelLearning2016}. Given a test instance $\vx$ and its embedding $\vz$, a training point with embedding $\vz_i$, receives the weight
    \begin{equation*}
        w_i(\vx) = \kappa(\vz, \vz_i) \in [0,1] \, ,
    \end{equation*}
    where $\kappa$ is a Gaussian kernel on the (whitened) embeddings. We defer details about the pre-processing and efficiency of our method to \cref{app:practical-implementation}. The weights accumulate into class-wise \emph{evidence}
    $$e_k(\vx) = \sum_{i=1}^N w_i(\vx) \llbracket y_i = k \rrbracket$$
    with total evidence $e(\vx) = \sum_{k=1}^K e_k(\vx)$. From this, the local maximum likelihood estimate is $\hat{p}(k \mid \vx) = e_k(\vx) / e(\vx)$. Note that this estimate is well-known --- it coincides with the posterior of a kernel density classifier in the embedding space \citep{parzenEstimationProbabilityDensity1962} and is a nonparametric counterpart of the Gaussian generative classifiers more recently used for feature-space uncertainty \citep{leeMahalanobisOOD2018}.
    In analogy to \citet{shakerAleatoricEpistemicRandomForests2020}, we place a multinomial likelihood around this estimate, with the evidence $e_k(\vx)$ acting as pseudo-count, which gives the \emph{local relative likelihood}
    \begin{equation*}
        \Lambda_{\vx}(p) = \prod_{k=1}^K\left(\frac{p(k \mid \vx)}{\hat{p}(k \mid \vx)}\right)^{e_k(\vx)} \, ,
    \end{equation*}
    and take the $\alpha$-cut, which admits a closed expression in the form of a KL-divergence ball around the local estimate whose radius is determined by the evidence:
    \begin{align*}
        &\left\{p \in \Delta_K :  \Lambda_{\vx}(p) \geq \alpha\right\} = \\
        & \qquad \left\{p \in \Delta_K : \operatorname{KL}(\hat{p} \parallel p) \leq \frac{\log(1/\alpha)}{e(\vx)}\right\},
    \end{align*}
    When there is little evidence near $\vx$, due to few similar training points or distribution shifts, the ball will be large --- vice versa, the more evidence, the smaller the ball. The class-wise intervals \cref{eq:credal-bounds} are obtained as the extremes of $p(k \mid \vx)$ over this ball, resulting in the credal set $\cQ_{\vx,\alpha}$. We denote our method as CreWare.

    \section{From Credal Predictions to Risk-Averse Decisions}\label{sec:risk-averse-prediction}

    Given a credal predictor, a single conditional distribution is replaced by a set $\cQ_{\vx}$, encoding the belief over plausible distributions. A decision-theoretic questions remains: how to turn a \emph{set} of distributions into a \emph{single} risk-averse decision? A natural answer is suggested by considering the \ac{cvar} itself, which, besides \cref{eq:variational-cvar}, also admits the dual representation \citep{shapiroStochasticProgramming2014}
    \begin{equation}\label{eq:cvar-dual}
        \cvar_\tau(L) = \sup \left\{\bE_S [L] : S \ll P,\ \frac{dS}{dP} \leq \frac{1}{\tau}\right\} \, .
    \end{equation}
    The \ac{cvar} is thus a worst-case expectation over reweightings of the joint distribution $P$ with bounded density ratio, and, in the sense of imprecise probability, a coherent upper prevision \citep{walleyCredal1991,pelessoniImprecisePrevisions2003}.

    Risk-averse prediction in the sense of \ac{cvar} can, therefore, be understood as ``guarding'' against every distribution in a set. The rule in \cref{eq:cvar-distribution} builds this set around the maximum likelihood estimate, but if this estimate is off, the true conditional distribution $p^*(\cdot \mid \vx)$ may lie outside of this set, so the worst case in \cref{eq:cvar-dual} offers no protection against this. The credal set fills this gap, since, by construction, it contains all distributions that are plausible given the training data. Naturally, an agent seeking to predict in a risk-averse manner should also consider the worst case over $\cQ_{\vx}$ --- applying the worst-case principle of \cref{eq:cvar-dual} to the uncertainty about the ground truth distribution itself. This motivates the following decision rule:
    \begin{equation}\label{eq:cvar-distribution-set}
        \boxed{
            \tilde{p}(\cdot \mid \vx) = \argmin_{p \in \Delta_K} \sup_{q \in \cQ_{\vx}} \bE_{Y \sim q}\left[ \big(\ell(p, Y) - v \big)_+ \right]
        }
    \end{equation}
    The resulting decision rule is $\Gamma$-maximin \citep{gilboa1989maxmin} applied to the hinged loss from \cref{eq:cvar-distribution}.

    \textbf{Special Cases.}
    The rule \cref{eq:cvar-distribution-set} contains classical decision rules as special cases of the two parameters $\tau$ and $\alpha$. For $\alpha = 1$, the credal set collapses to the maximum likelihood estimate and \cref{eq:cvar-distribution-set} reduces to the precise \ac{cvar}-optimal prediction \cref{eq:cvar-distribution} based on this estimate.
    For $\tau = 1$, the calibrated threshold falls below every attainable loss, and the hinge becomes inactive, reducing the rule to $\Gamma$-maximin of the expected loss,
    \begin{equation*}
        \tilde{p} =  \argmin_{p \in \Delta_K} \sup_{q \in \cQ_{\vx}} \bE_{Y \sim q}\left[\ell(p, Y)\right],
    \end{equation*}
    that is, the ambiguity-averse but outcome risk-neutral robust Bayes act \citep{berger2013statistical}. For losses convex in $p$ (such as the log-loss and Brier score), this becomes the distribution in the credal set maximizing the generalized entropy of \citet{gruenwaldGameTheory2004}:
    \begin{equation}\label{eq:outcome-minimax}
        \tilde{p} = \argmax_{q \in \cQ_{\vx}} \bE_{Y \sim q}\left[\ell(q, Y)\right]
    \end{equation}
    For $\tau \to 0$, the \ac{cvar} approaches the essential supremum of the loss, the threshold $v$ approaches, roughly speaking, the smallest value at which the hinge is inactive for each prediction, and the rule tends to Wald's minimax act \citep{waldStasticalDecisionFunctions1950} over the outcomes deemed plausible by the credal set:
    \begin{equation*}
        \tilde{p} =  \argmin_{p \in \Delta_K}\ \max_{y : \uppp(y \mid \vx) > 0}\ \ell(p, y) \, ,
    \end{equation*}
    which is the most pessimistic rule at the outcome level. For $(\tau, \alpha) = (1,1)$, the Bayes act, the maximum likelihood prediction, is recovered. For $\alpha \downarrow 0$, the credal set is the whole simplex, and the rule reduces to Wald's minimax act over all outcomes, independently of $\tau$. In this sense, \cref{eq:cvar-distribution-set} is a two-parameter family interpolating between expected-risk minimization, the precise \ac{cvar} prediction, robust Bayes, and Wald's minimax.

    \textbf{Calibrating the threshold.} The decision rule \cref{eq:cvar-distribution-set} presupposes $v$, but says nothing about its value. As discussed in \cref{sec:minimizing-cvar}, the threshold is a single population-level scalar, rather than a per-instance quantity, whose role is twofold. It parameterizes the decision rule, determining which losses matter, and it is at the same time the variational variable of the \ac{cvar} itself, which is optimized exactly at the \ac{var} of the loss distribution. Moreover, the roles are coupled: every threshold $v$ induces its own predictions, which we denote by $\tp_v(\cdot\mid\vx)$ via \cref{eq:cvar-distribution-set}, these predictions induce a loss distribution, and the threshold in turn sits at the $(1-\tau)$-quantile of this distribution. The predictions and the threshold are, therefore, determined jointly --- for a fixed $v$, the instance-wise problems are solved by the decision rule, so what remains is a one-dimensional problem in $v$. Practically, we solve this using a hold-out dataset $\cD_{\rm{hold}} = \{(\vx_i, y_i)\}_{i=1}^M$ as
    \begin{equation*}
        \hat{v} = \argmin_{v \in \cV} \left\{v + \frac{1}{\tau M} \sum_{i=1}^M \big(\ell(\tp_v(\cdot \mid \vx_i), y_i) - v \big)_+ \right\},
    \end{equation*}
    where $\cV \subset \bR$ is a finite grid of candidate thresholds. For every candidate threshold, the instance-wise problems \cref{eq:cvar-distribution-set} are solved on $\cD_{\rm{hold}}$. \cref{alg:threshold} provides the full procedure while we refer to \cref{app:practical-implementation} for additional practical implementation details.

    \begin{algorithm}[t]
\caption{Computation of $v$ on hold-out data.}
\label{alg:threshold}
\begin{algorithmic}
    \Require credal predictor $H$, hold-out data $\cD_{\rm{hold}}$, \\ level $\tau$, grid $\cV$
    \Ensure calibrated threshold $\hat{v}$
    \For{$i=1,\ldots,M$}
        \State $\cQ_{\vx_i} \gets H(\vx_i)$
    \EndFor
    \For{$v \in \cV$}
        \For{$i=1,\ldots,M$}
            \State $\tp_v(\cdot \mid \vx_i) \gets \underset{p \in \Delta_K}{\argmin} \underset{q \in \cQ_{\vx_i}}{\sup} \underset{Y \sim q}{\bE}\big[(\ell(p, Y) - v)_+\big]$
            \State $L_i \gets \ell(\tp_v(\cdot \mid \vx_i), y_i)$
        \EndFor
        \State $C(v) \gets v + \frac{1}{\tau M} \sum_{i=1}^M (L_i - v)_+$
    \EndFor
    \State $\hat{v} \gets \argmin_{v \in \cV} C(v)$
    \State \textbf{diagnostic:} verify $\frac{1}{M} \sum_{i=1}^M \llbracket L_i > \hat{v} \rrbracket \approx \tau$
\end{algorithmic}
    \end{algorithm}

    \section{Empirical Results}\label{sec:empirical-results}
    We evaluate the proposed credal predictor and decision rule in several environments. First, we assess the credal predictor in isolation, showing that it adapts to distribution shift and performs well in \ac{ood} detection. Second, we study the complete pipeline in risk-averse decision making under distribution shift, where it reduces \ac{cvar} relative to the baselines, with gains that grow as the shift severity increases. Third, we establish its applicability to offline reinforcement learning.

    We consider credal ensembling [CreEns] \citep{nguyen2025credal}, credal Bayesian deep learning [CreBNN] \citep{caprio2023credal}, credal wrapper [CreWra] \citep{wangCredalWrapper2025}, credal distributionally robust optimization [CreDro] \citep{wangDro2026}, credal relative likelihood [CreRL] \citep{lohrCredal2025}, and efficient credal prediction [EffCre] \citep{hofmanEfficientCredalPrediction2026} as credal baselines and SQwash \citep{laguelSuperquantileBasedLearning2022} and AdaCVaR \citep{curiAdaptiveSampling2020} as risk-averse baselines. We publish the code for all experiments in GitHub\footnote{\url{https://github.com/timoverse/credal-ml-risk-averse-dm}} while all experimental details are described in \cref{app:experimental}.

    \subsection{Credal Sets under Distribution Shift}\label{sec:shift-set-size}
    In order to assess the quality of the credal sets generated by our novel approach, we study them under distribution shifts. Such shifts typically degrade predictive (probabilistic) performance, and for a risk-averse system, this degradation should be noticed. As the incoming (test) data shifts away from the training distribution, the predictor becomes increasingly less able to make accurate probabilistic predictions, and a proper credal predictor should represent this by increasing epistemic uncertainty, i.e., by enlarging the sets.

    We use a ResNet18 \citep{heResNet2016} trained on the BloodMNIST dataset from the MedMNIST collection \citep{yangMedMNIST2021} and use the corruptions provided by MedMNIST-C \citep{diSalvoMedMNISTC2024} at test time. We measure the set size of the credal set $\cQ_{\vx}$ by
    \begin{equation}\label{eq:set-size}
        \frac{1}{K}\sum_{k=1}^K \uppp(k \mid \vx) - \lowp(k \mid \vx) \, ,
    \end{equation}
    and average over all corruptions of a given severity. We compute the accuracy at each severity as a reference.

    \cref{fig:distribution-shift} shows the results. The baseline credal predictors increase their set size only slightly across severities, while the accuracy decreases significantly. Our method, in contrast, generates substantially larger sets as the severity increases, and is therefore more aligned with the decreasing accuracy. We provide results with additional datasets in \cref{app:distributional-shifts}.

    \begin{figure}[t]
        \centering
        \includegraphics[width=0.5\linewidth]{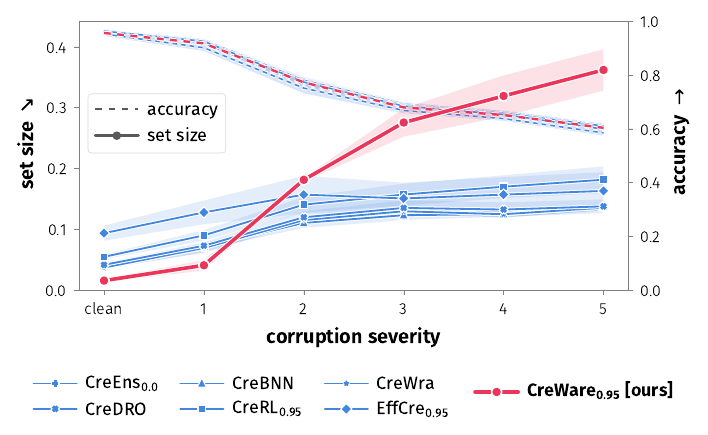}
        \caption{Distribution shift on BloodMNIST. Set size and accuracy against shift severity, mean and standard deviation over three seeds. CreWare's sets grow as accuracy declines, while the baselines' barely grow.}
        \label{fig:distribution-shift}
    \end{figure}

    \subsection{Out-of-Distribution Detection}

    Beyond predicting in a risk-averse manner on \ac{id} data from the training distribution or shifted versions thereof, a credal predictor should also recognize when predictions can not be trusted at all. That is, when it is faced with \ac{ood} data, the credal sets should expand and the epistemic uncertainty should increase accordingly, separating \ac{id} from \ac{ood} inputs, a task commonly used to assess the quality of credal sets \citep{wangCredalWrapper2025,wangCredalDistillation2026,lohrCredal2025}.
    We use the epistemic uncertainty measure
    \begin{equation}\label{eq:epistemic-uncertainty}
        \operatorname{EU}(\cQ_{\vx}) =  \overline{\operatorname{S}}(\cQ_{\vx}) - \underline{\operatorname{S}}(\cQ_{\vx}) \, ,
    \end{equation}
    where $\overline{\operatorname{S}}(\cQ_{\vx}) = \sup_{p(\cdot \mid \vx) \in \cQ_{\vx}} \operatorname{S}(p(\cdot \mid \vx))$ with Shannon entropy $\operatorname{S}$ \citep{abellanUncertaintyMeasure2006}. \ac{ood} detection is then binary classification between \ac{id} and \ac{ood} instances based on \cref{eq:epistemic-uncertainty}, and evaluated by the \ac{auroc}.

    We follow the setup from the OpenOOD benchmark \citep{zhangOpenOOD2024} --- using a ResNet18 --- with CIFAR-10 \citep{krizhevsky2009cifar} as the \ac{id} dataset, while CIFAR-100 \citep{krizhevsky2009cifar}, and TinyImageNet \citep{le2015tiny} are near-\ac{ood} datasets and SVHN \citep{netzer2011svhn}, and Places365 \citep{zhouPlaces3652018} are far-\ac{ood} datasets.

    \begin{table}[t]
        \centering
        \caption{\Acl*{ood} detection with CIFAR-10 as \acs*{id} data. \acs*{auroc}, mean and standard deviation over three runs, best in \textbf{bold}. CreWare (ours) outperforms all baselines on near- and far-\acs*{ood} data.}
        \label{tab:ood}
        \setlength{\tabcolsep}{4pt}
        \resizebox{0.6\linewidth}{!}{%
            \begin{tabular}{@{}lcc@{\hskip 5pt}|@{\hskip 5pt}cc@{}}
                \toprule
                & \multicolumn{2}{c@{\hskip 5pt}}{\textbf{Near-OOD}} & \multicolumn{2}{c}{\textbf{Far-OOD}} \\
                \textbf{Method}         & \textbf{CIFAR-100}                    & \textbf{T.ImageNet}                   & \textbf{SVHN}                         & \textbf{Places365}                    \\ \midrule
                $\text{CreEns}_{0.0}$   & $.917 \scriptstyle \pm .000$          & $.911 \scriptstyle \pm .001$          & $.970 \scriptstyle \pm .003$ & $.919 \scriptstyle \pm .003$ \\
                CreBNN                  & $.867 \scriptstyle \pm .007$          & $.866 \scriptstyle \pm .006$          & $.892 \scriptstyle \pm .016$ & $.872 \scriptstyle \pm .006$ \\
                CreWra                  & $.918 \scriptstyle \pm .000$          & $.912 \scriptstyle \pm .001$          & $.971 \scriptstyle \pm .003$ & $.920 \scriptstyle \pm .003$ \\
                CreDro                  & $.918 \scriptstyle \pm .001$          & $.911 \scriptstyle \pm .001$          & $.972 \scriptstyle \pm .001$ & $.920 \scriptstyle \pm .003$ \\
                $\text{CreRL}_{0.95}$   & $.913 \scriptstyle \pm .003$          & $.908 \scriptstyle \pm .001$          & $.969 \scriptstyle \pm .007$ & $.917 \scriptstyle \pm .003$ \\
                $\text{EffCre}_{0.95}$  & $.859 \scriptstyle \pm .001$          & $.857 \scriptstyle \pm .001$ & $.911 \scriptstyle \pm .006$ & $.871 \scriptstyle \pm .003$ \\
                \midrule
                $\text{CreWare}_{0.95}$ & $\mathbf{.931 \scriptstyle \pm .002}$ & $\mathbf{.929 \scriptstyle \pm .001}$ & $\mathbf{.980 \scriptstyle \pm .005}$ & $\mathbf{.941 \scriptstyle \pm .002}$ \\
                \bottomrule
            \end{tabular}%
        }
    \end{table}
    \cref{tab:ood} shows the results. Our method achieves the highest \ac{auroc} in near- and far-\ac{ood} settings, demonstrating that the credal sets adapt suitably to \ac{ood} instances.
    We provide results for additional datasets in \cref{app:ood-results}. In addition, we perform \ac{ood} detection with different measures of uncertainty in \cref{app:ood-measures}, as \cref{eq:epistemic-uncertainty} has recently been criticized for violating desirable properties \citep{huellermeierCredalUncertainty2022}.

    While the evaluation here is standard, it deserves a caveat. Recent work has argued that supervised \ac{ood} detection is inherently misspecified, because the (epistemic) uncertainty of Bayesian and ensemble methods concerns the labels rather than whether an instance stems from another distribution \citep{liOODWrongQuestions2025}. We do not claim to resolve this issue, yet, unlike ensemble-based credal sets, the local relative likelihood explicitly compares a test instance with the training data. Thus, our credal sets widen exactly when the feature embeddings disagree with the training embeddings.
    Complementing the \ac{ood} results, we assess the quality of credal sets by their coverage of the ground-truth distribution and their efficiency in \cref{app:coverage-efficiency}.

    \subsection{Risk-Aversion under Distribution Shift}
    The previous section assessed the credal sets in isolation, showing their adaptability to distributions shifts and \ac{ood} instances. We now focus on the complete pipeline, which combines the novel credal predictor and the proposed \ac{cvar} decision rule. Risk-aversion is particularly import in medical settings, because a single catastrophic decision may outweigh many good ones and, importantly, distribution shifts are not uncommon \citep{kohWilds2021,finlaysonDatasetShift2021}.

    We therefore train a ResNet18 on PathMNIST from the MedMNIST collection \citep{yangMedMNIST2021} and evaluate the complete pipeline on the clean test split as well as on its corrupted versions from MedMNIST-C at five levels of severity. Note that not even the clean split is free of shift as it stems from different patients \citep{yangMedMNIST2021}. For each severity, we compute the empirical \ac{cvar} \cref{eq:cvar-empirical} of the test losses at $\tau \in [0.01, 0.05, 0.2]$ with the threshold of our decision rule calibrated on hold-out data based on \cref{alg:threshold}. Since risk-averse prediction inherits an intrinsic trade-off between tail and average performance, we also report the mean loss.

    The results are shown in \cref{fig:risk-averse-shift}. Our method (CreWare\plus rule) outperforms or equals the risk-averse baselines and the CreWra method (equipped with our decision rule), at each level $\tau$, and across corruption severities in terms of \ac{cvar}. At the same time, it attains a better mean loss at high corruption severities, while sacrificing little at low corruption severities.
    \cref{app:res-risk-aversion-shift} provides results with an additional dataset, the Brier loss, and an ablation for $\alpha$ and $\tau$.
    \begin{figure}
        \centering
        \includegraphics[width=0.5\linewidth]{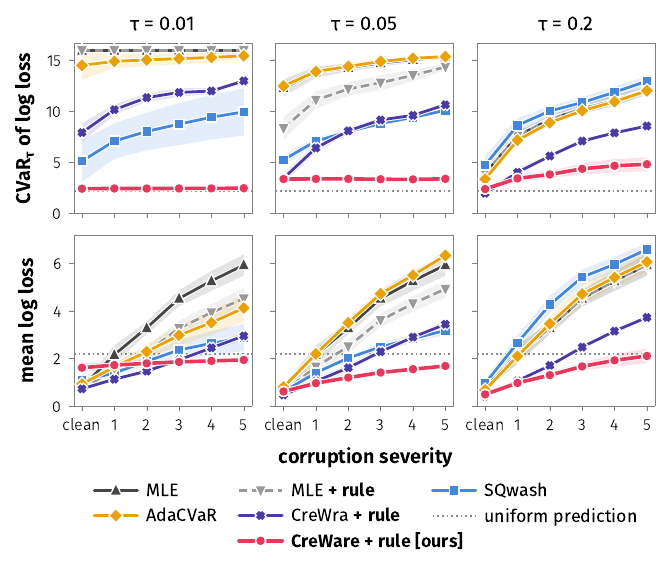}
        \caption{Risk-averse prediction under distribution shift with PathMNIST. Our method beats baselines in terms of CVaR, while sacrificing little in mean loss.}
        \label{fig:risk-averse-shift}
    \end{figure}

    \begin{figure}[t]
        \centering
        \includegraphics[width=0.55\linewidth]{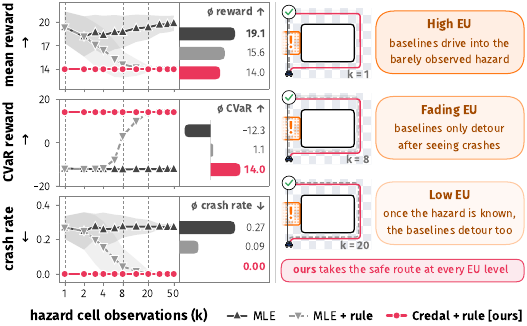}
        \caption{Offline driving task. \textbf{Left}: mean reward, CVaR reward, and crash rate vs.\ hazard cell observations $k$. \textbf{Right}: routes at three values of $k$ (decreasing epistemic uncertainty). Credal\plus rule (ours) acts safe, whereas the MLE always drives into the hazard and MLE\plus rule does so while the hazard is under-observed.}
        \label{fig:driving}
    \end{figure}

    \subsection{Risk-Averse Reinforcement Learning}
    \label{sec:rl-driving}
    We instantiate our pipeline in a reinforcement learning setting, where the loss is a negated task reward rather than a proper scoring rule. The CVaR construction of \cref{eq:cvar-empirical,eq:cvar-distribution-set} is agnostic to this distinction.
    In this setup, an agent aims to drive a car safely to a target cell (\cref{fig:driving}). A short route crosses three hazard cells, beats the cost of a longer, safe detour in expectation, but crashes in $\approx 27\%$ of the drives. The agent plans offline from a fixed dataset. Each non-hazard cell is well sampled while a hazard cell is observed $k$ times, hence epistemic uncertainty about the hazard persists for small $k$. Per cell $s$, the observed counts yield an MLE $\hat p_s$ of the crash probability, and a relative-likelihood $\alpha$-cut yields the credal set $\cQ_s=[\lowp_s,\uppp_s]$.

    We compare three agents. MLE maximizes the expected reward under the point estimates. Credal\plus rule (ours) applies the decision rule \cref{eq:cvar-distribution-set} to the per-cell credal sets minimizing \ac{cvar} over the negated reward. The MLE\plus rule serves as an ablation. It replaces the credal set by the MLE prediction and thus optimizes, under the estimated model, the \ac{cvar} objective pursued by risk-averse reinforcement learning methods such as CeSoR \citep{greenbergRiskAverseRL2022}.

    The point-estimate risk-averse agent, MLE\plus rule, is unsafe exactly while the hazard is under-observed (see \cref{fig:driving}). As long as no crash has been recorded, $\hat p_s = 0$, it perceives no tail and drives into it. Credal\plus rule (ours), in contrast, maintains a $0\%$ crash rate across the entire range. Before any crash is observed, the credal set represents the epistemic uncertainty about hazard cells, thus avoiding the short route. Once the crash probability is known, the \ac{cvar} objective rejects the shortcut for its high risk. The comparison with MLE\plus rule isolates the contribution of the credal set, since under the same risk objective only the credal agent acts safe throughout. Results on a larger map show the same picture (\cref{app:rl-results}).

    \section{Additional Related Work}
    Credal sets originated in the imprecise probability literature \citep{leviCredal1978,walleyCredal1991,augustin2014introduction}, and have entered \ac{ml} to represent epistemic uncertainty \citep{coraniBnn2012,huellermeierUncertainty2021}. Existing credal predictors, whether built from ensembles \citep{wangCredalWrapper2025,nguyen2025credal}, sets of priors \citep{caprio2023credal}, or relative likelihood \citep{lohrCredal2025}, derive the set from disagreement among models trained on the same data, so that its width is governed by the training procedure rather than the test instance. Our construction, in contrast, fits the set post hoc, similar to \citet{hofmanEfficientCredalPrediction2026}, but does so locally for each instance. Decision rules acting on credal sets, such as $\Gamma$-maximin \citep{gilboa1989maxmin}, maximality \citep{walleyCredal1991}, and E-admissibility \citep{leviCredal1978}, hedge against ignorance about the ground-truth distribution, yet aggregate outcomes by expectation over the population, and thus may accept a known but severe risk.

    The aleatoric perspective is the subject of a largely separate literature, in which the \ac{cvar} is minimized directly \citep{fanLearningTopkLoss2017,curiAdaptiveSampling2020} or as an instance of \ac{dro} \citep{duchiLearningModelsDRO2018,kuhnDRO2025}. The ambiguity set of the latter reweights the empirical distribution rather than ranging over models, and is in this sense orthogonal to the credal set. Keeping the two apart is what allows the outcome tail and the credal set to be governed by distinct parameters $\tau$ and $\alpha$. The same distinction recurs in sequential settings (e.g.\ reinforcement learning), with risk-averse methods targeting the tail of the return \citep{chowCVaROptimization2015,greenbergRiskAverseRL2022}, and robust methods optimizing against a set of plausible models \citep{iyengar2005robust,wiesemann2013robust}. Safety under epistemic uncertainty thus requires both. Closest to our work in RL is \citet{rigterRLEpistemic2023}, which addresses both sources, though through a second-order posterior over models rather than the prior-free credal sets used here. \cref{app:related-work} gives an extended discussion of related work.

    \section{Conclusion}
    We have introduced a novel approach to risk-averse prediction in settings of high epistemic uncertainty, arguing that such settings require a combination of both, risk-aversion and reliability. Indeed, risk-averse decisions are not very useful when they are based on incorrect probability predictions, which, however, is likely if the learner faces high epistemic uncertainty. To incentivize risk-aversion, we proposed to use the \ac{cvar} as a loss function\,---\,a challenge for learning, because this loss does not decompose over training instances. Moreover, to ensure reliability, we proposed a credal approach and developed a new credal predictor based on the local relative likelihood. Finally, combining these two components, we proposed a risk-averse decision rule to minimize the \ac{cvar} over a credal set.

    Our approach comes with limitations which point to interesting directions for future work. Since the construction relies on the quality of the embeddings, its sensitivity deserves closer investigation. Another open question is whether the credal sets allow for formal guarantees similar to conformal prediction.

    \section*{Acknowledgements}

    \textit{Timo Löhr}, \textit{Maximilian Muschalik} and \textit{Eyke Hüllermeier} gratefully acknowledge funding by the Deutsche Forschungsgemeinschaft (DFG, German Research Foundation): TRR 318/3 2026 – 438445824.
    We acknowledge the Munich Center for Machine Learning (MCML) for their support of this work. The authors gratefully acknowledge LMU Klinikum for providing computing resources on their Clinical Open Research Engine (CORE).

    \bibliography{references}

@inproceedings{greenbergRiskAverseRL2022,
  author       = {Ido Greenberg and
                  Yinlam Chow and
                  Mohammad Ghavamzadeh and
                  Shie Mannor},
  title        = {Efficient Risk-Averse Reinforcement Learning},
  booktitle    = {Advances in Neural Information Processing Systems 35: Annual Conference on Neural Information Processing Systems},
  year         = {2022}
}

@article{gneitingProperScoringRules2007,
  title={Strictly proper scoring rules, prediction, and estimation},
  author={Gneiting, Tilmann and Raftery, Adrian E},
  journal={Journal of the American statistical Association},
  volume={102},
  number={477},
  pages={359--378},
  year={2007},
  publisher={Taylor \& Francis}
}

@article{foellmerConvexRiskMeasures2002,
  author       = {Hans F{\"{o}}llmer and
                  Alexander Schied},
  title        = {Convex measures of risk and trading constraints},
  journal      = {Finance Stochastics},
  volume       = {6},
  number       = {4},
  pages        = {429--447},
  year         = {2002},
}

@article{artznerCoherentRiskMeasures1999,
  title={Coherent measures of risk},
  author={Artzner, Philippe and Delbaen, Freddy and Eber, Jean-Marc and Heath, David},
  journal={Mathematical finance},
  volume={9},
  number={3},
  pages={203--228},
  year={1999},
  publisher={Wiley Online Library}
}

@article{rockafellarOptimizationCVaR2000,
  title={Optimization of conditional value-at-risk},
  author={Rockafellar, R Tyrrell and Uryasev, Stanislav and others},
  journal={Journal of risk},
  volume={2},
  pages={21--42},
  year={2000}
}

@article{gruenwaldGameTheory2004,
  title={Game theory, maximum entropy, minimum discrepancy and robust Bayesian decision theory},
  author={Peter Gr{\"u}nwald and A Philip Dawid},
  journal={Annals of Statistics},
  year={2004},
  volume={32},
  pages={1367-1433},
}

@inproceedings{zhaiBoostedCVaRClassification2021,
  author       = {Runtian Zhai and
                  Chen Dan and
                  Arun Sai Suggala and
                  J. Zico Kolter and
                  Pradeep Ravikumar},
  title        = {Boosted CVaR Classification},
  booktitle    = {Advances in Neural Information Processing Systems 34: Annual Conference on Neural Information Processing Systems},
  pages        = {21860--21871},
  year         = {2021},
}

@article{chenCredalAmbiguitySets2026,
  author       = {Mengqi Chen and
                  Thomas B. Berrett and
                  Theodoros Damoulas and
                  Michele Caprio},
  title        = {Bulk-Calibrated Credal Ambiguity Sets: Fast, Tractable Decision Making under Out-of-Sample Contamination},
  journal      = {CoRR},
  volume       = {abs/2601.21324},
  year         = {2026},
}

@article{kuhnDRO2025,
  author       = {Daniel Kuhn and
                  Soroosh Shafiee and
                  Wolfram Wiesemann},
  title        = {Distributionally robust optimization},
  journal      = {Acta Numer.},
  volume       = {34},
  pages        = {579--804},
  year         = {2025},
}

@inproceedings{chowCVaROptimization2015,
  author       = {Yinlam Chow and
                  Aviv Tamar and
                  Shie Mannor and
                  Marco Pavone},
  title        = {Risk-Sensitive and Robust Decision-Making: a CVaR Optimization Approach},
  booktitle    = {Advances in Neural Information Processing Systems 28: Annual Conference on Neural Information Processing Systems 2015},
  pages        = {1522--1530},
  year         = {2015},
}

@inproceedings{mhammediPACBayesianCVaR2020,
  author       = {Zakaria Mhammedi and
                  Benjamin Guedj and
                  Robert C. Williamson},
  title        = {PAC-Bayesian Bound for the Conditional Value at Risk},
  booktitle    = {Advances in Neural Information Processing Systems 33: Annual Conference on Neural Information Processing Systems},
  year         = {2020},
}

@article{sluijtermanLikelihoodRatioNNs2025,
  author       = {Laurens Sluijterman and
                  Eric Cator and
                  Tom Heskes},
  title        = {Likelihood-ratio-based confidence intervals for neural networks},
  journal      = {Mach. Learn.},
  volume       = {114},
  number       = {4},
  pages        = {116},
  year         = {2025},
}

@article{camposProbabilityIntervals1994,
  author       = {Luis M. de Campos and
                  Juan F. Huete and
                  Seraf{\'{\i}}n Moral},
  title        = {Probability Intervals: a Tool for uncertain Reasoning},
  journal      = {Int. J. Uncertain. Fuzziness Knowl. Based Syst.},
  volume       = {2},
  number       = {2},
  pages        = {167--196},
  year         = {1994},
}

@book{sutton1998reinforcement,
  title={Reinforcement learning: An introduction},
  author={Sutton, Richard S and Barto, Andrew G and others},
  volume={1},
  number={1},
  year={1998},
  publisher={MIT press Cambridge}
}

@article{bauerle2011markov,
  title={Markov decision processes with average-value-at-risk criteria},
  author={B{\"a}uerle, Nicole and Ott, Jonathan},
  journal={Mathematical Methods of Operations Research},
  volume={74},
  number={3},
  pages={361--379},
  year={2011},
  publisher={Springer}
}

@inproceedings{shakerAleatoricEpistemicRandomForests2020,
  author       = {Mohammad Hossein Shaker and
                  Eyke H{\"{u}}llermeier},
  title        = {Aleatoric and Epistemic Uncertainty with Random Forests},
  booktitle    = {Advances in Intelligent Data Analysis {XVIII} - 18th International
                  Symposium on Intelligent Data Analysis},
  series       = {Lecture Notes in Computer Science},
  volume       = {12080},
  pages        = {444--456},
  publisher    = {Springer},
  year         = {2020},
}

@inproceedings{huellermeierCredalUncertainty2022,
  author       = {Eyke H{\"{u}}llermeier and
                  S{\'{e}}bastien Destercke and
                  Mohammad Hossein Shaker},
  title        = {Quantification of Credal Uncertainty in Machine Learning: {A} Critical
                  Analysis and Empirical Comparison},
  booktitle    = {Uncertainty in Artificial Intelligence, Proceedings of the Thirty-Eighth
                  Conference on Uncertainty in Artificial Intelligence},
  series       = {Proceedings of Machine Learning Research},
  volume       = {180},
  pages        = {548--557},
  publisher    = {{PMLR}},
  year         = {2022},
}

@article{laguelSuperquantileBasedLearning2022,
  author       = {Yassine Laguel and
                  J{\'{e}}r{\^{o}}me Malick and
                  Za{\"{\i}}d Harchaoui},
  title        = {Superquantile-Based Learning: {A} Direct Approach Using Gradient-Based Optimization},
  journal      = {J. Signal Process. Syst.},
  volume       = {94},
  number       = {2},
  pages        = {161--177},
  year         = {2022},
}

@article{sagawaDROGroupShifts2019,
  author       = {Shiori Sagawa and
                  Pang Wei Koh and
                  Tatsunori B. Hashimoto and
                  Percy Liang},
  title        = {Distributionally Robust Neural Networks for Group Shifts: On the Importance of Regularization for Worst-Case Generalization},
  journal      = {CoRR},
  volume       = {abs/1911.08731},
  year         = {2019},
}

@inproceedings{curiAdaptiveSampling2020,
  author       = {Sebastian Curi and
                  Kfir Y. Levy and
                  Stefanie Jegelka and
                  Andreas Krause},
  title        = {Adaptive Sampling for Stochastic Risk-Averse Learning},
  booktitle    = {Advances in Neural Information Processing Systems 33: Annual Conference on Neural Information Processing Systems},
  year         = {2020},
}

@article{niPossibilisticPredictiveUncertainty2026,
  author       = {Yao Ni and
                  Jeremie Houssineau and
                  Yew Soon Ong and
                  Piotr Koniusz},
  title        = {Possibilistic Predictive Uncertainty for Deep Learning},
  journal      = {CoRR},
  volume       = {abs/2605.00600},
  year         = {2026},
}

@book{shapiroStochasticProgramming2014,
  author       = {Alexander Shapiro and
                  Darinka Dentcheva and
                  Andrzej Ruszczynski},
  title        = {Lectures on Stochastic Programming - Modeling and Theory, Second Edition},
  series       = {{MOS-SIAM} Series on Optimization},
  volume       = {16},
  publisher    = {{SIAM}},
  year         = {2014},
}

@article{pattonDynamicModelsES2019,
title = {Dynamic semiparametric models for expected shortfall (and Value-at-Risk)},
journal = {Journal of Econometrics},
volume = {211},
number = {2},
pages = {388-413},
year = {2019},
author = {Andrew J. Patton and Johanna F. Ziegel and Rui Chen},
}

@article{dimitriadisExpectedShortfall2019,
author = {Timo Dimitriadis and Sebastian Bayer},
title = {{A joint quantile and expected shortfall regression framework}},
volume = {13},
journal = {Electronic Journal of Statistics},
number = {1},
publisher = {Institute of Mathematical Statistics and Bernoulli Society},
pages = {1823 -- 1871},
year = {2019},
}

@article{somaStatisticalLearningCVaR2020,
  author       = {Tasuku Soma and
                  Yuichi Yoshida},
  title        = {Statistical Learning with Conditional Value at Risk},
  journal      = {CoRR},
  volume       = {abs/2002.05826},
  year         = {2020},
}

@article{duchiLearningModelsDRO2018,
  author       = {John C. Duchi and
                  Hongseok Namkoong},
  title        = {Learning Models with Uniform Performance via Distributionally Robust
                  Optimization},
  journal      = {CoRR},
  volume       = {abs/1810.08750},
  year         = {2018},
}

@inproceedings{leeLearningBoundsRiskSensitive2020,
  author       = {Jaeho Lee and
                  Sejun Park and
                  Jinwoo Shin},
  title        = {Learning Bounds for Risk-sensitive Learning},
  booktitle    = {Advances in Neural Information Processing Systems 33: Annual Conference on Neural Information Processing Systems},
  year         = {2020},
}

@inproceedings{williamsonFairnessRiskMeasures2019,
  author       = {Robert C. Williamson and
                  Aditya Krishna Menon},
  title        = {Fairness risk measures},
  booktitle    = {Proceedings of the 36th International Conference on Machine Learning},
  series       = {Proceedings of Machine Learning Research},
  volume       = {97},
  pages        = {6786--6797},
  publisher    = {{PMLR}},
  year         = {2019},
}

@inproceedings{rigterBayesAdaptiveRL2021,
  author       = {Marc Rigter and
                  Bruno Lacerda and
                  Nick Hawes},
  title        = {Risk-Averse Bayes-Adaptive Reinforcement Learning},
  booktitle    = {Advances in Neural Information Processing Systems 34: Annual Conference on Neural Information Processing Systems},
  pages        = {1142--1154},
  year         = {2021},
}

@inproceedings{levyLargeScaleDRO2020,
  author       = {Daniel Levy and
                  Yair Carmon and
                  John C. Duchi and
                  Aaron Sidford},
  title        = {Large-Scale Methods for Distributionally Robust Optimization},
  booktitle    = {Advances in Neural Information Processing Systems 33: Annual Conference on Neural Information Processing Systems},
  year         = {2020},
}

@article{takedaCVaRLearning2009,
  author       = {Akiko Takeda and
                  Takafumi Kanamori},
  title        = {A robust approach based on conditional value-at-risk measure to statistical learning problems},
  journal      = {Eur. J. Oper. Res.},
  volume       = {198},
  number       = {1},
  pages        = {287--296},
  year         = {2009},
}

@article{laguelSuperquantiles2021,
  title={Superquantiles at work: Machine learning applications and efficient subgradient computation},
  author={Laguel, Yassine and Pillutla, Krishna and Malick, J{\'e}r{\^o}me and Harchaoui, Zaid},
  journal={Set-Valued and Variational Analysis},
  volume={29},
  number={4},
  pages={967--996},
  year={2021},
  publisher={Springer}
}

@article{acerbiExpectedShortfall2002,
  title={On the coherence of expected shortfall},
  author={Acerbi, Carlo and Tasche, Dirk},
  journal={Journal of banking \& finance},
  volume={26},
  number={7},
  pages={1487--1503},
  year={2002},
  publisher={Elsevier}
}

@inproceedings{fanLearningTopkLoss2017,
  author       = {Yanbo Fan and
                  Siwei Lyu and
                  Yiming Ying and
                  Bao{-}Gang Hu},
  title        = {Learning with Average Top-k Loss},
  booktitle    = {Advances in Neural Information Processing Systems 30: Annual Conference on Neural Information Processing Systems},
  pages        = {497--505},
  year         = {2017},
}

@article{diSalvoMedMNISTC2024,
  author       = {Francesco Di Salvo and
                  Sebastian Doerrich and
                  Christian Ledig},
  title        = {MedMNIST-C: Comprehensive benchmark and improved classifier robustness by simulating realistic image corruptions},
  journal      = {CoRR},
  volume       = {abs/2406.17536},
  year         = {2024},
}

@book{walleyCredal1991,
	author = {Peter Walley},
	editor = {},
	publisher = {Chapman \& Hall},
	title = {Statistical Reasoning with Imprecise Probabilities},
	year = {1991}
}

@Inbook{leviCredal1978,
author="Levi, Isaac",
title="On Indeterminate Probabilities",
bookTitle="Foundations and Applications of Decision Theory: Volume I Theoretical Foundations",
year="1978",
publisher="Springer",
address="Dordrecht",
pages="233--261",
}

@book{augustin2014introduction,
  title={Introduction to imprecise probabilities},
  author={Augustin, Thomas and Coolen, Frank PA and De Cooman, Gert and Troffaes, Matthias CM},
  year={2014},
  publisher={John Wiley \& Sons}
}

@article{huellermeierUncertainty2021,
          volume = {110},
           pages = {457--506},
          author = {Eyke H\"ullermeier and Willem Waegeman},
           title = {Aleatoric and epistemic uncertainty in machine learning: an introduction to concepts and methods},
          number = {3},
         journal = {Machine Learning},
            year = {2021},
}

@Inbook{coraniBnn2012,
author="Corani, G.
and Antonucci, A.
and Zaffalon, M.",
title="Bayesian Networks with Imprecise Probabilities: Theory and Application to Classification",
bookTitle="Data Mining: Foundations and Intelligent Paradigms: Volume 1: Clustering, Association and Classification",
year="2012",
publisher="Springer",
pages="49--93",
}

@article{wilksLikelihood1938,
    author = "Wilks, S. S.",
    title = "{The Large-Sample Distribution of the Likelihood Ratio for Testing Composite Hypotheses}",
    journal = "Annals Math. Statist.",
    volume = "9",
    number = "1",
    pages = "60--62",
    year = "1938"
}

@book{shaferLikelihood1976,
 author = {Glenn Shafer},
 publisher = {Princeton University Press},
 title = {A Mathematical Theory of Evidence},
 urldate = {2026-07-14},
 year = {1976}
}

@article{lecunMNIST1998,
  author       = {Yann LeCun and
                  L{\'{e}}on Bottou and
                  Yoshua Bengio and
                  Patrick Haffner},
  title        = {Gradient-based learning applied to document recognition},
  journal      = {Proc. {IEEE}},
  volume       = {86},
  number       = {11},
  pages        = {2278--2324},
  year         = {1998},
}

@inproceedings{cimpoiTextures2014,
  author       = {Mircea Cimpoi and
                  Subhransu Maji and
                  Iasonas Kokkinos and
                  Sammy Mohamed and
                  Andrea Vedaldi},
  title        = {Describing Textures in the Wild},
  booktitle    = {2014 {IEEE} Conference on Computer Vision and Pattern Recognition},
  pages        = {3606--3613},
  publisher    = {{IEEE} Computer Society},
  year         = {2014},
}

@article{zhouPlaces3652018,
  author       = {Bolei Zhou and
                  {\`{A}}gata Lapedriza and
                  Aditya Khosla and
                  Aude Oliva and
                  Antonio Torralba},
  title        = {Places: {A} 10 Million Image Database for Scene Recognition},
  journal      = {{IEEE} Trans. Pattern Anal. Mach. Intell.},
  volume       = {40},
  number       = {6},
  pages        = {1452--1464},
  year         = {2018},
}

@techreport{le2015tiny,
  author      = {Ya Le and Xuan Yang},
  title       = {Tiny ImageNet Visual Recognition Challenge},
  institution = {Stanford University},
  type        = {{CS} 231N course report},
  year        = {2015}
}

@inproceedings{netzer2011svhn,
  author    = {Yuval Netzer and Tao Wang and Adam Coates and
               Alessandro Bissacco and Bo Wu and Andrew Y. Ng},
  title     = {Reading Digits in Natural Images with Unsupervised Feature Learning},
  booktitle = {{NIPS} Workshop on Deep Learning and Unsupervised Feature Learning},
  year      = {2011},
}

@techreport{krizhevsky2009cifar,
  author      = {Alex Krizhevsky},
  title       = {Learning Multiple Layers of Features from Tiny Images},
  institution = {University of Toronto},
  year        = {2009},
}

@article{pelessoniImprecisePrevisions2003,
  author       = {Renato Pelessoni and
                  Paolo Vicig},
  title        = {Imprecise Previsions For Risk Measurement},
  journal      = {Int. J. Uncertain. Fuzziness Knowl. Based Syst.},
  volume       = {11},
  number       = {4},
  pages        = {393--412},
  year         = {2003},
}

@book{royall2017statistical,
  title={Statistical evidence: a likelihood paradigm},
  author={Royall, Richard},
  year={1997},
  publisher={Routledge}
}

@article{hofmanEfficientCredalPrediction2026,
  author       = {Paul Hofman and
                  Timo L{\"{o}}hr and
                  Maximilian Muschalik and
                  Yusuf Sale and
                  Eyke H{\"{u}}llermeier},
  title        = {Efficient Credal Prediction through Decalibration},
  journal      = {CoRR},
  volume       = {abs/2603.08495},
  year         = {2026},
}

@article{sengeReliableClassification2014,
  author       = {Robin Senge and
                  Stefan B{\"{o}}sner and
                  Krzysztof Dembczynski and
                  J{\"{o}}rg Haasenritter and
                  Oliver Hirsch and
                  Norbert Donner{-}Banzhoff and
                  Eyke H{\"{u}}llermeier},
  title        = {Reliable classification: Learning classifiers that distinguish aleatoric
                  and epistemic uncertainty},
  journal      = {Inf. Sci.},
  volume       = {255},
  pages        = {16--29},
  year         = {2014},
}

@inproceedings{wangCredalDistillation2026,
  author       = {Kaizheng Wang and
                  Fabio Cuzzolin and
                  David Moens and
                  Hans Hallez},
  title        = {Credal Ensemble Distillation for Uncertainty Quantification},
  booktitle    = {Fortieth {AAAI} Conference on Artificial Intelligence, Thirty-Eighth
                  Conference on Innovative Applications of Artificial Intelligence,
                  Sixteenth Symposium on Educational Advances in Artificial Intelligence,
                  {AAAI}},
  pages        = {26319--26327},
  publisher    = {{AAAI} Press},
  year         = {2026},
}

@article{wassermanBelief1990,
 author = {Larry A. Wasserman},
 journal = {The Canadian Journal of Statistics / La Revue Canadienne de Statistique},
 number = {3},
 pages = {183--196},
 publisher = {[Statistical Society of Canada, Wiley]},
 title = {Belief Functions and Statistical Inference},
 urldate = {2026-07-14},
 volume = {18},
 year = {1990}
}

@article{denoeuxBelief2014,
title = {Likelihood-based belief function: Justification and some extensions to low-quality data},
journal = {International Journal of Approximate Reasoning},
volume = {55},
number = {7},
pages = {1535-1547},
year = {2014},
note = {Special issue: Harnessing the information contained in low-quality data sources},
author = {Thierry Denœux},
}

@inproceedings{antonucciLikelihood2012,
  title={Likelihood-based robust classification with Bayesian networks},
  author={Antonucci, Alessandro and Cattaneo, Marco EGV and Corani, Giorgio},
  booktitle={International Conference on Information Processing and Management of Uncertainty in Knowledge-Based Systems},
  pages={491--500},
  year={2012},
  organization={Springer}
}

@article{lohrCredal2025,
  title={Credal prediction based on relative likelihood},
  author={L{\"o}hr, Timo and Hofman, Paul and Mohr, Felix and H{\"u}llermeier, Eyke},
  journal={Advances in Neural Information Processing Systems},
  volume={38},
  pages={145184--145213},
  year={2025}
}

@article{duboisPossibility1997,
  title={A semantics for possibility theory based on likelihoods},
  author={Dubois, Didier and Moral, Seraf{\i}n and Prade, Henri},
  journal={Journal of Mathematical analysis and applications},
  volume={205},
  number={2},
  pages={359--380},
  year={1997},
  publisher={Elsevier}
}

@article{wangCreNet2025,
  title={Creinns: Credal-set interval neural networks for uncertainty estimation in classification tasks},
  author={Wang, Kaizheng and Shariatmadar, Keivan and Manchingal, Shireen Kudukkil and Cuzzolin, Fabio and Moens, David and Hallez, Hans},
  journal={Neural networks},
  volume={185},
  pages={107198},
  year={2025},
  publisher={Elsevier}
}

@article{abellanUncertaintyMeasure2006,
  author       = {Joaqu{\'{\i}}n Abell{\'{a}}n and
                  George J. Klir and
                  Seraf{\'{\i}}n Moral},
  title        = {Disaggregated total uncertainty measure for credal sets},
  journal      = {Int. J. Gen. Syst.},
  volume       = {35},
  number       = {1},
  pages        = {29--44},
  year         = {2006},
}

@inproceedings{liOODWrongQuestions2025,
  author       = {Yucen Lily Li and
                  Daohan Lu and
                  Polina Kirichenko and
                  Shikai Qiu and
                  Tim G. J. Rudner and
                  C. Bayan Bruss and
                  Andrew Gordon Wilson},
  title        = {Position: Supervised Classifiers Answer the Wrong Questions for {OOD}
                  Detection},
  booktitle    = {Forty-second International Conference on Machine Learning, {ICML}},
  publisher    = {OpenReview.net},
  year         = {2025},
}

@book{waldStasticalDecisionFunctions1950,
  author       = {Abraham Wald},
  title        = {Statistical Decision Functions},
  publisher    = {John Wiley},
  year         = {1950},
}

@inproceedings{wangCredalWrapper2025,
 author = {Wang, Kaizheng and Cuzzolin, Fabio and Shariatmadar, Keivan and Moens, David and Hallez, Hans},
 booktitle = {International Conference on Learning Representations},
 editor = {Y. Yue and A. Garg and N. Peng and F. Sha and R. Yu},
 pages = {70290--70318},
 title = {Credal Wrapper of Model Averaging for Uncertainty Estimation in Classification},
 year = {2025}
}

@article{gilboa1989maxmin,
  title={Maxmin expected utility with non-unique prior},
  author={Gilboa, Itzhak and Schmeidler, David},
  journal={Journal of mathematical economics},
  volume={18},
  number={2},
  pages={141--153},
  year={1989},
  publisher={Elsevier}
}

@book{berger2013statistical,
  title={Statistical decision theory and Bayesian analysis},
  author={Berger, James O},
  year={2013},
  publisher={Springer Science \& Business Media}
}

@article{troffaes2007decision,
  title={Decision making under uncertainty using imprecise probabilities},
  author={Troffaes, Matthias CM},
  journal={International journal of approximate reasoning},
  volume={45},
  number={1},
  pages={17--29},
  year={2007},
  publisher={Elsevier}
}

@article{nguyen2025credal,
  title={Credal ensembling in multi-class classification},
  author={Nguyen, Vu-Linh and Zhang, Haifei and Destercke, S{\'e}bastien},
  journal={Machine Learning},
  volume={114},
  number={1},
  pages={19},
  year={2025},
  publisher={Springer}
}

@article{caprio2023credal,
  title={Credal Bayesian deep learning},
  author={Caprio, Michele and Dutta, Souradeep and Jang, Kuk Jin and Lin, Vivian and Ivanov, Radoslav and Sokolsky, Oleg and Lee, Insup},
  journal={arXiv preprint arXiv:2302.09656},
  year={2023}
}

@article{wangDro2026,
  title={Learning Credal Ensembles via Distributionally Robust Optimization},
  author={Wang, Kaizheng and Faza, Ghifari Adam and Cuzzolin, Fabio and Chau, Siu Lun and Moens, David and Hallez, Hans},
  journal={arXiv preprint arXiv:2602.08470},
  year={2026}
}

@article{rigterRLEpistemic2023,
  title={One risk to rule them all: A risk-sensitive perspective on model-based offline reinforcement learning},
  author={Rigter, Marc and Lacerda, Bruno and Hawes, Nick},
  journal={Advances in neural information processing systems},
  volume={36},
  pages={77520--77545},
  year={2023}
}

@article{iyengar2005robust,
  title={Robust dynamic programming},
  author={Iyengar, Garud N},
  journal={Mathematics of Operations Research},
  volume={30},
  number={2},
  pages={257--280},
  year={2005},
  publisher={INFORMS}
}

@article{wiesemann2013robust,
  title={Robust Markov decision processes},
  author={Wiesemann, Wolfram and Kuhn, Daniel and Rustem, Ber{\c{c}}},
  journal={Mathematics of Operations Research},
  volume={38},
  number={1},
  pages={153--183},
  year={2013},
  publisher={INFORMS}
}

@InProceedings{kiyani2025decision,
  title = 	 {Decision Theoretic Foundations for Conformal Prediction: Optimal Uncertainty Quantification for Risk-Averse Agents},
  author =       {Kiyani, Shayan and Pappas, George J. and Roth, Aaron and Hassani, Hamed},
  booktitle = 	 {Proceedings of the 42nd International Conference on Machine Learning},
  pages = 	 {30943--30965},
  year = 	 {2025},
  volume = 	 {267},
  series = 	 {Proceedings of Machine Learning Research},
  month = 	 {13--19 Jul},
  publisher =    {PMLR},
}

@inproceedings{heResNet2016,
  author       = {Kaiming He and
                  Xiangyu Zhang and
                  Shaoqing Ren and
                  Jian Sun},
  title        = {Deep Residual Learning for Image Recognition},
  booktitle    = {2016 {IEEE} Conference on Computer Vision and Pattern Recognition},
  pages        = {770--778},
  publisher    = {{IEEE} Computer Society},
  year         = {2016},
}

@article{zhangOpenOOD2024,
  author       = {Jingyang Zhang and
                  Jingkang Yang and
                  Pengyun Wang and
                  Haoqi Wang and
                  Yueqian Lin and
                  Haoran Zhang and
                  Yiyou Sun and
                  Xuefeng Du and
                  Yixuan Li and
                  Ziwei Liu and
                  Yiran Chen and
                  Hai Li},
  title        = {OpenOOD v1.5: Enhanced Benchmark for Out-of-Distribution Detection},
  journal      = {J. Data-centric Mach. Learn. Res.},
  volume       = {2},
  pages        = {(3):1--32},
  year         = {2024},
}

@article{yangMedMNIST2021,
  author       = {Jiancheng Yang and
                  Rui Shi and
                  Donglai Wei and
                  Zequan Liu and
                  Lin Zhao and
                  Bilian Ke and
                  Hanspeter Pfister and
                  Bingbing Ni},
  title        = {MedMNIST v2: {A} Large-Scale Lightweight Benchmark for 2D and 3D Biomedical
                  Image Classification},
  journal      = {CoRR},
  volume       = {abs/2110.14795},
  year         = {2021},
}

@inproceedings{wilsonDeepKernelLearning2016,
  author       = {Andrew Gordon Wilson and
                  Zhiting Hu and
                  Ruslan Salakhutdinov and
                  Eric P. Xing},
  title        = {Deep Kernel Learning},
  booktitle    = {Proceedings of the 19th International Conference on Artificial Intelligence
                  and Statistics},
  series       = {{JMLR} Workshop and Conference Proceedings},
  volume       = {51},
  pages        = {370--378},
  publisher    = {JMLR.org},
  year         = {2016},
}

@article{parzenEstimationProbabilityDensity1962,
  title={On estimation of a probability density function and mode},
  author={Parzen, Emanuel},
  journal={The annals of mathematical statistics},
  volume={33},
  number={3},
  pages={1065--1076},
  year={1962},
  publisher={JSTOR}
}

@inproceedings{leeMahalanobisOOD2018,
  author       = {Kimin Lee and
                  Kibok Lee and
                  Honglak Lee and
                  Jinwoo Shin},
  title        = {A Simple Unified Framework for Detecting Out-of-Distribution Samples
                  and Adversarial Attacks},
  booktitle    = {Advances in Neural Information Processing Systems 31: Annual Conference
                  on Neural Information Processing Systems},
  pages        = {7167--7177},
  year         = {2018},
}

@article{tibshiraniLocalLikelihood1987,
  title={Local likelihood estimation},
  author={Tibshirani, Robert and Hastie, Trevor},
  journal={Journal of the American Statistical Association},
  volume={82},
  number={398},
  pages={559--567},
  year={1987},
  publisher={Taylor \& Francis}
}

@book{loaderLocalLikelihood2006,
  title={Local regression and likelihood},
  author={Loader, Catherine},
  year={2006},
  publisher={Springer Science \& Business Media}
}

@inproceedings{hendrycksBenchmarkingCorruptions2019,
  author       = {Dan Hendrycks and
                  Thomas G. Dietterich},
  title        = {Benchmarking Neural Network Robustness to Common Corruptions and Perturbations},
  booktitle    = {7th International Conference on Learning Representations, {ICLR}},
  publisher    = {OpenReview.net},
  year         = {2019},
}

@inproceedings{petersonCIFAR10H2019,
  author       = {Joshua C. Peterson and
                  Ruairidh M. Battleday and
                  Thomas L. Griffiths and
                  Olga Russakovsky},
  title        = {Human Uncertainty Makes Classification More Robust},
  booktitle    = {{IEEE/CVF} International Conference on Computer Vision, {ICCV}},
  pages        = {9616--9625},
  publisher    = {{IEEE}},
  year         = {2019},
}

@article{radovanovicHubsInSpace2010,
  author       = {Milos Radovanovic and
                  Alexandros Nanopoulos and
                  Mirjana Ivanovic},
  title        = {Hubs in Space: Popular Nearest Neighbors in High-Dimensional Data},
  journal      = {J. Mach. Learn. Res.},
  volume       = {11},
  pages        = {2487--2531},
  year         = {2010},
}

@inproceedings{sculleyMiniBatchKMeans2010,
  author       = {D. Sculley},
  editor       = {Michael Rappa and
                  Paul Jones and
                  Juliana Freire and
                  Soumen Chakrabarti},
  title        = {Web-scale k-means clustering},
  booktitle    = {Proceedings of the 19th International Conference on World Wide Web,
                  {WWW}},
  pages        = {1177--1178},
  publisher    = {{ACM}},
  year         = {2010},
}

@inproceedings{kingmaAdam2015,
  author       = {Diederik P. Kingma and
                  Jimmy Ba},
  title        = {Adam: {A} Method for Stochastic Optimization},
  booktitle    = {3rd International Conference on Learning Representations, {ICLR}},
  year         = {2015},
}

@article{finlaysonDatasetShift2021,
author = {Samuel G. Finlayson  and Adarsh Subbaswamy  and Karandeep Singh  and John Bowers  and Annabel Kupke  and Jonathan Zittrain  and Isaac S. Kohane  and Suchi Saria },
title = {The Clinician and Dataset Shift in Artificial Intelligence},
journal = {New England Journal of Medicine},
volume = {385},
number = {3},
pages = {283-286},
year = {2021},
}

@inproceedings{kohWilds2021,
  author       = {Pang Wei Koh and
                  Shiori Sagawa and
                  Henrik Marklund and
                  Sang Michael Xie and
                  Marvin Zhang and
                  Akshay Balsubramani and
                  Weihua Hu and
                  Michihiro Yasunaga and
                  Richard Lanas Phillips and
                  Irena Gao and
                  Tony Lee and
                  Etienne David and
                  Ian Stavness and
                  Wei Guo and
                  Berton Earnshaw and
                  Imran S. Haque and
                  Sara M. Beery and
                  Jure Leskovec and
                  Anshul Kundaje and
                  Emma Pierson and
                  Sergey Levine and
                  Chelsea Finn and
                  Percy Liang},
  title        = {{WILDS:} {A} Benchmark of in-the-Wild Distribution Shifts},
  booktitle    = {Proceedings of the 38th International Conference on Machine Learning,
                  {ICML}},
  series       = {Proceedings of Machine Learning Research},
  volume       = {139},
  pages        = {5637--5664},
  publisher    = {{PMLR}},
  year         = {2021},
}

@inproceedings{ovadiaDatasetShift2019,
  author       = {Yaniv Ovadia and
                  Emily Fertig and
                  Jie Ren and
                  Zachary Nado and
                  David Sculley and
                  Sebastian Nowozin and
                  Joshua V. Dillon and
                  Balaji Lakshminarayanan and
                  Jasper Snoek},
  title        = {Can you trust your model's uncertainty? Evaluating predictive uncertainty
                  under dataset shift},
  booktitle    = {Advances in Neural Information Processing Systems 32: Annual Conference
                  on Neural Information Processing Systems},
  pages        = {13969--13980},
  year         = {2019},
}

@article{foellmer2011entropic,
  title={Entropic risk measures: Coherence vs. convexity, model ambiguity and robust large deviations},
  author={F{\"o}llmer, Hans and Knispel, Thomas},
  journal={Stochastics and Dynamics},
  volume={11},
  number={02n03},
  pages={333--351},
  year={2011},
  publisher={World Scientific}
}

@article{cuzzolinIntersectionProbability2022,
  author       = {Fabio Cuzzolin},
  title        = {The intersection probability: betting with probability intervals},
  journal      = {CoRR},
  volume       = {abs/2201.01729},
  year         = {2022},
  url          = {https://doi.org/10.48550/arXiv.2201.01729},
  doi          = {10.48550/ARXIV.2201.01729},
  eprinttype   = {arXiv},
  eprint       = {2201.01729},
  bibsource    = {dblp computer science bibliography, https://dblp.org}
}

    \clearpage
    \appendix

    \crefalias{section}{appendix}
    \crefalias{subsection}{appendix}
    \crefalias{subsubsection}{appendix}

    \begin{center}
        {\LARGE\scshape Credal Machine Learning for\\Risk-Averse Decision Making: \\
        Supplementary Materials\par}
    \end{center}

    \addtocontents{toc}{\protect\setcounter{tocdepth}{2}}
    \setcounter{tocdepth}{2}
    \tableofcontents

    \clearpage

    \section{Variations on the CVaR}\label{app:variants}
    The loss $L = \ell(h(X), Y)$ inherits its randomness from two sources: the draw of the instance $X \sim P_X$ and, fixing an instance $\vx$, the draw of the outcome $Y \sim p^*(\cdot \mid \vx)$. The tail in \cref{eq:cvar} is taken with respect to $L$ under the joint measure $P$ --- marginalizing over instances and outcomes --- and we therefore refer to \cref{eq:cvar} as the \emph{marginal} \ac{cvar}. We make this distinction, because the two sources of randomness can also be aggregated separately, and different orders of aggregation yield different risk functionals. A first alternative applies the \ac{cvar} to the conditional loss $L_{\vx} = \ell(h(\vx), Y)$ with $Y \sim p^*(\cdot \mid \vx)$, and averages the resulting instance-wise tail risks
    \begin{equation}\label{eq:cvar-instance-wise}
        \bE_{P_X}\left[\cvar_\tau(L_X)\right].
    \end{equation}
    This quantity captures the \emph{aleatoric} tail risk locally, but, crucially, its estimation relies on the presence of ground-truth conditional distributions, which are usually not available in machine learning. In econometrics, \cref{eq:cvar-instance-wise} arises as the natural estimand of expected shortfall regression \citep{dimitriadisExpectedShortfall2019,pattonDynamicModelsES2019}.

    For our purposes, however, the outer averaging over instances in \cref{eq:cvar-instance-wise} is precisely the problem. A large \ac{cvar} at one instance can be compensated by a small \ac{cvar} at another instance. In practice, e.g., in a medical context, this means that large errors on a few patients may be offset by small errors on the remaining ones. The marginal \ac{cvar} permits no such compensation: any sufficiently large mass of high losses enters the tail of the loss distribution, it does not matter whether it is spread across many instances or concentrated on a single, systematically mishandled one.

    The composition in \cref{eq:cvar-instance-wise} can also be transposed. Applying the \ac{cvar} to the expected risks $\bE_{Y \sim p^*(\cdot \mid \vx)}\left[L_{\vx}\right]$ across instances or subgroups yields worst-case subpopulation risks, which is used in fairness \citep{williamsonFairnessRiskMeasures2019,laguelSuperquantileBasedLearning2022} and group-robust learning \citep{sagawaDROGroupShifts2019}.

    In this work, we deliberately consider the marginal \ac{cvar}. In safety-critical applications, what matters is that large errors are rare in the population as a whole, irrespective of their origin. In the remainder of the paper, we therefore refer to \cref{eq:cvar} and its estimate \cref{eq:cvar-empirical} when discussing \ac{cvar}.

    \clearpage

    \section{Extended Related Work}
    \label{app:related-work}
    Credal sets originate in the imprecise probability literature as a representation of partial ignorance \citep{walleyCredal1991,leviCredal1978,augustin2014introduction}, and have more recently been adopted in \ac{ml} as a formalism for epistemic uncertainty \citep{huellermeierUncertainty2021,coraniBnn2012}. In \ac{ml}, credal predictors are typically built at training time, by training networks to emit a lower and an upper probability per class \citep{wangCreNet2025}, by instantiating BNNs with a set of priors \citep{caprio2023credal}, or by extracting class-wise intervals from a collection of ensemble predictions \citep{wangCredalWrapper2025,nguyen2025credal,wangDro2026}. A second line of work constructs credal sets based on relative likelihood, namely as an $\alpha$-cut of the normalized likelihood function. Before the notion was used in \ac{ml}, it appeared in statistics as likelihood-based confidence regions \citep{wilksLikelihood1938}, in the reading of the likelihood as a belief function \citep{shaferLikelihood1976,wassermanBelief1990,denoeuxBelief2014}, and in likelihood-based decision functions and their $\alpha$-cuts \citep{antonucciLikelihood2012}, with a possibilistic view in \citet{duboisPossibility1997}. Recent methods instantiate it in neural networks \citep{lohrCredal2025,sluijtermanLikelihoodRatioNNs2025,niPossibilisticPredictiveUncertainty2026}. These existing methods derive the credal set from disagreement among models trained on the same data, so that its width is governed by the training procedure rather than by the characteristics of the test instance. Whether the resulting set is useful is a question rarely posed in decision-theoretic terms, evaluation instead relying on proxies such as \ac{ood} detection, whose suitability has been questioned \citep{liOODWrongQuestions2025}. What distinguishes our construction is that the set is fit post-hoc and per test instance.

    Acting on a credal set requires a decision rule, and several have been proposed. The principal candidates are $\Gamma$-maximin \citep{gilboa1989maxmin}, maximality \citep{walleyCredal1991}, and E-admissibility \citep{leviCredal1978}, surveyed by \citet{troffaes2007decision}. These rules differ in how they select a distribution from the set, but agree in what they do afterwards, in that outcomes are aggregated by expectation. Risk aversion is therefore exercised at one level only. The rule hedges against ignorance about the ground-truth distribution, yet it remains indifferent to the tail of the outcome distribution once a model has been fixed, and thus accepts a known but severe risk whenever that risk is favorable on average (cf. \cref{sec:risk-averse-prediction}). A related perspective replaces the set of distributions by a set of labels, with \citet{kiyani2025decision} studying worst-case loss minimization over conformal prediction sets.

    The second, aleatoric perspective is the subject of a largely separate literature. The \ac{cvar} is the most common risk measure in \ac{ml} \citep{rockafellarOptimizationCVaR2000,artznerCoherentRiskMeasures1999}, minimized directly \citep{fanLearningTopkLoss2017,curiAdaptiveSampling2020,zhaiBoostedCVaRClassification2021} or as an instance of a \ac{dro} problem whose ambiguity set is a reweighting of the empirical distribution \citep{duchiLearningModelsDRO2018,sagawaDROGroupShifts2019,kuhnDRO2025,chenCredalAmbiguitySets2026}. In this sense the \ac{cvar} is itself a worst case over a set of distributions. That set, however, reweights the loss the data already exhibit rather than ranging over models, and is therefore orthogonal to the credal set of the previous paragraphs. Keeping the two apart is what allows the two forms of risk aversion to be governed by distinct parameters, $\tau$ for the outcome tail and $\alpha$ for the credal set. The same distinction recurs in sequential settings, where risk-averse methods target the tail of the return under a known model \citep{chowCVaROptimization2015,greenbergRiskAverseRL2022}, while robust Markov decision processes optimize against a set of plausible transition models at each state \citep{iyengar2005robust,wiesemann2013robust}. Safety under epistemic uncertainty thus requires both. Our decision rule based on credal sets combines them, and the case study in \cref{sec:rl-driving} illustrates that our two-level distinction governs sequential decisions, exhibiting the safety behavior of specialized RL methods. Closest among these is \citet{rigterRLEpistemic2023}, which likewise addresses both sources, though through a second-order posterior over models rather than the prior-free credal set used here.

    \clearpage

    \section{Proofs and Derivations}

    \subsection{Minimizer of Expected Risk versus CVaR}\label{app:proof-minimizers}
    Given $\vx$, for brevity, we write $p^*$ for $p^*(\cdot \mid \vx)$ and $p^*(y)$ for $p^*(y \mid \vx)$, denote the expected loss as $\bell(p,q) \coloneqq \bE_{Y \sim q}\left[\ell(p, Y)\right]$, and the objective of \cref{eq:cvar-distribution} as $J(p) \coloneqq \bE_{Y \sim p^*}\left[(\ell(p, Y) - v^*)_+ \right]$. Due to strict propriety $\bell(\cdot, q)$ is minimized by $q$ alone, for every $q \in \Delta_K$ \citep{gneitingProperScoringRules2007}. Additionally, by tail non-degeneracy, the tail-conditional distribution $q_{\cT} \coloneqq p^*(y) \llbracket y \in \cT_{v^*} \rrbracket / p^*(\cT_{v^*})$ is well defined, and $q_{\cT} \neq p^*$, since $q_{\cT}(\cT_{v^*}) = 1 > p^*(\cT_{v^*})$.

    \paragraph{Regularity conditions}

    \begin{enumerate}
        \item[(R1)] \textit{Full support:} $p^*(y \mid \vx) > 0$ for all $y \in \cY$.
        \item[(R2)] \textit{Finiteness and continuity:} $\ell$ is real-valued on $\operatorname{relint}(\Delta_K) \times \cY$ and each map $p \mapsto \ell(p,y)$ is continuous at $p^*$.
        \item[(R3)] \textit{Threshold regularity:} $v^* \in \bR$ and $\ell(p^*, y) \neq v^*$ for all $y \in \cY$.
    \end{enumerate}

    \begin{proof}
        Let $u \coloneqq q_{\cT} - p^* \neq 0$ and $p_t \coloneqq p^* + tu = (1-t)p^* +tq_{\cT}$, which lies in $\operatorname{relint}(\Delta_K)$ for $t \in [0,1)$ by (R1). Note that $u$ is not a probability vector but a vector in $\bR^K$. By (R2) and (R3), $\ell(p_t,y) \rightarrow \ell(p^*,y)$ as $t \rightarrow 0$ with $\ell(p^*,y) > v^*$ for $y \in \cT_{v^*}$ and $\ell(p^*,y) < v^*$ otherwise, hence there is a $t_0 \in (0,1)$ such that, for all $t \in [0, t_0)$, the hinge is active at $p_t$ on exactly the tail classes:
        \begin{equation}\label{eq:objective}
            J(p_t) = \sum_{y \in \cT_{v^*}} p^*(y)(\ell(p_t, y) - v^*) = p^*(\cT_{v^*})\left[\bell(p_t,q_{\cT}) - v^* \right].
        \end{equation}
        We still have to show that $\varphi(t) \coloneqq \bell(p_t,q_{\cT})$ is strictly decreasing on $[0,1)$. Let $\psi(t) \coloneqq \bell(p_t,p_t)$ denote the generalized entropy of $p_t$ \citep{gneitingProperScoringRules2007}. For fixed $t \in [0,1)$, the map $s \mapsto \bell(p_t,p_s)$ is affine in $s$ and, by strict propriety, dominates $\psi$ with equality exactly at $s = t$. In other words, it is a line supporting $\psi$ from above, with slope $m(t) \in \bR$, and evaluating it at $s = 1$ gives
        \begin{equation*}
            \varphi(t) = \psi(t) + (1-t)m(t).
        \end{equation*}
        Given $0 \leq t_1 < t_2 < 1$, evaluating the supporting line at $t_2$ in the point $t_1$, and vice versa, yields
        \begin{equation*}
            (t_2 - t_1) m(t_2) \leq \psi(t_2) - \psi(t_1) \leq (t_2 - t_1) m(t_1),
        \end{equation*}
        hence $m(t_2) \leq m(t_1)$. Substituting $\psi(t_1) = \bell(p_{t_1}, p_{t_2}) - (t_2 - t_1)m(t_1)$ then gives
        \begin{equation*}
            \varphi(t_2) - \varphi(t_1) = \left[\psi(t_2) - \bell(p_{t_1}, p_{t_2})\right] + (1 - t_2)(m(t_2) - m(t_1)) < 0,
        \end{equation*}
        since the bracket is negative by strict propriety $(p_{t_1} \neq p_{t_2})$ and the second term is non-positive. Combining this with \cref{eq:objective} and $p^*(\cT_{v^*}) > 0$,
        \begin{equation*}
            J(p_t) = p^*(\cT_{v^*}) \left[\varphi(t) - v^*\right] < p^*(\cT_{v^*}) \left[\varphi(0) - v^* \right] = J(p^*) \quad \text{for all } t \in (0, t_0),
        \end{equation*}
        thus $p^* \notin \argmin_{p\in\Delta_K} J(p)$. Since $p_t \rightarrow p^*$, $p^*$ is not even a local minimizer.
    \end{proof}

    \subsection{Derivations of Examples}\label{app:examples}
    We provide examples with derivations of how the minimizer changes when moving from expected risk to \ac{cvar}.

    \subsubsection{Log-loss}
    In the setting of \cref{sec:minimizing-cvar}, we have $p^* = (0.8, 0.2)$ and $v = \frac{1}{2}$, such that
    \begin{equation*}
        J(p(y_1)) = 0.8 \left(-\log p(y_1) - \frac{1}{2}\right)_+ + 0.2 \left(-\log(1-p(y_1)) - \frac{1}{2}\right)_+.
    \end{equation*}
    The hinge of the first term is active iff $p(y_1) < e^{-1/2}$, and the hinge of the second term is active iff $p(y_1) > 1 - e^{-1/2}$, so $J$ is convex and piecewise smooth with breakpoints at these two points. At $p(y_1) = e^{-1/2}$, the one-sided derivatives are
    \begin{equation*}
        J'_-(e^{-1/2}) = -\frac{0.8}{p(y_1)} + \frac{0.2}{1-p(y_1)} \approx -0.81 < 0,
        \qquad
        J'_+(e^{-1/2}) = \frac{0.2}{1-p(y_1)} \approx 0.51 > 0,
    \end{equation*}
    hence, by convexity, the minimum is attained exactly there: $\tp = (e^{-1/2}, 1 - e^{-1/2})$. The probability of the most likely class is thus reduced until its loss reaches the point at which the hinge would become active. Note also that the minimizer remains unchanged for every $p^*(y_1) > e^{-1/2}$.

    \subsubsection{Brier score}
    Let $\ell_{\rm{Brier}}(p, y) = \sum_{k=1}^K \left(p(k) - \llbracket y = k \rrbracket\right)^2$, so that in the binary case $\ell_{\rm{Brier}}(p, y_1) = 2(1 - p(y_1))^2$ and $\ell_{\rm{Brier}}(p, y_2) = 2p(y_1)^2$. With $p^* = (0.8, 0.2)$ and $v = \frac{1}{4}$,
    \begin{equation*}
        J(p(y_1)) = 0.8 \left(2(1-p(y_1))^2 - \frac{1}{4}\right)_+ + 0.2 \left(2p(y_1)^2 - \frac{1}{4}\right)_+.
    \end{equation*}
    The hinge of the first term is active iff $p(y_1) < 1 - \frac{1}{2\sqrt{2}}$, and the hinge of the second term is active iff $p(y_1) > \frac{1}{2\sqrt{2}}$. On the interval where both are active, $J'(p(y_1)) = -3.2\,(1 - p(y_1)) + 0.8\,p(y_1) < 0$, since the unconstrained minimizer $p(y_1) = 0.8$ lies to its right, and beyond $1 - \frac{1}{2\sqrt{2}}$ only the second term remains, which is increasing. Hence $\tp = \left(1 - \frac{1}{2\sqrt{2}}, \frac{1}{2\sqrt{2}}\right) \approx (0.65, 0.35)$. As for the log-loss, the likely class is pushed down until its loss reaches the threshold, and the minimizer is $1 - \sqrt{v/2}$ for every $p^*(y_1) > 1 - \sqrt{v/2}$. Note that for the one-coordinate binary Brier score $(p(y_1) - \llbracket y = y_1 \rrbracket)^2$, that is, half of the above, the same threshold $v = \frac{1}{4}$ coincides with the minimax value $\min_p \max_y \ell(p, y)$, so that both hinges are inactive at $p(y_1) = \frac{1}{2}$ and the uniform distribution is the (unique) minimizer with $J = 0$.

    \clearpage

    \section{Algorithmic Details}\label{app:practical-implementation}
    Our novel credal predictor consists of two distinct procedures. Given an encoder $f : \mathcal{X} \to \mathcal{Z}, \vx \mapsto \vz$ or a pre-trained network $h(\vx) = \softmax(g(f(\vx)))$ that can be reduced to an encoder, e.g., by removing the linear head $g$, using the method requires a fitting and prediction step.

    \paragraph{Fitting.}
    The credal predictor operates post hoc and fitting reduces to a single forward pass of the encoder over the training data to generate training embeddings $\vz_i = f(\vx_i) \in \bR^d$ with $d$ some embedding dimension. The embeddings are generated with test-time transforms and are L2-normalized and whitened. Concretely, let $\Sigma$ denote the class-wise centered (pooled within-class) covariance of the normalized embeddings, with ridge regularization of $10^{-2}$ times its mean diagonal entry. The embeddings are mapped by $W = \Sigma^{-\gamma/2}$, where $\gamma = 1$ corresponds to full Mahalanobis whitening and $\gamma =0$ to no whitening ($W = I$). Using full whitening sharpens the separation of far-away instances, but can induce ``hubness'' \citep{radovanovicHubsInSpace2010}, causing a few embeddings to become the nearest neighbors of most instances. Therefore, $\gamma$ is selected at fit time as the largest value on the grid $\{0, 0.25, 0.5, 0.75, 1.0\}$ for which the leave-one-out kernel vote on the training data remains healthy, that is, its accuracy is within $0.02$ of the accuracy at $\gamma = 0$, and the share of the most frequently predicted class exceeds the true majority share by at most $0.1$. The kernel is Gaussian, specifically $\kappa(\vz, \vz') = \exp(-\lVert \vz - \vz' \rVert^2 / 2\sigma^2)$, with the bandwidth $\sigma$ set to $0.55$ times the median distance of a reference point to its nearest other reference point in whitened space. The fitted predictor thus stores the whitened embeddings with their labels (in float16 format), the map $W$, and $\sigma$, concluding the fitting step.

    \paragraph{Prediction.}
    Given a test instance $\vx$, its embedding is normalized, whitened, and compared to the $200$ nearest reference points. The weights are accumulated into the class-wise evidence $e_k(\vx)$, the total evidence $e(\vx)$, and the local maximum likelihood estimate $\hat{p}(\cdot \mid \vx)$ as described in \cref{sec:credal}. Profiling the multinomial likelihood over the remaining classes collapses the local relative likelihood of class $k$ to the binomial
    \begin{equation*}
        \Lambda_{\vx,k}(p) = \left(\frac{p}{\hat{p}(k \mid \vx)}\right)^{e_k(\vx)} \left(\frac{1 - p}{1 - \hat{p}(k \mid \vx)}\right)^{e(\vx) - e_k(\vx)},
    \end{equation*}
    which increases on $[0, \hat{p}(k \mid \vx)]$ and decreases on $[\hat{p}(k \mid \vx), 1]$, such that $\lowp(k \mid \vx)$ and $\uppp(k \mid \vx)$ are the two solutions of $\Lambda_{\vx,k}(p) = \alpha$. The solutions are obtained using a simple bisection of $50$ steps per bound. Prediction, therefore, costs one forward pass, one $N \times d$ distance computation, and $K$ fixed-step bisections per instance.

    \paragraph{Efficiency and embedding compression.}
    The most costly operation at test time is the distance computation against all $N$ stored reference embeddings. In the settings considered in this paper, the cost is rather modest --- for example, for CIFAR-10, $N=45000$, and $d = 512$, but it scales linearly with training set size. However, this dependence can be easily removed by compressing the reference embeddings into $M \ll N$ \emph{landmarks}. The whitened embeddings are partitioned by mini-batch $k$-means \citep{sculleyMiniBatchKMeans2010} with batch size of $4096$, three initializations, and at most $60$ passes. Each landmark $m$ stores its centroid $c_m$ together with the class histogram of its cluster, $n_{m,k} = \sum_i \llbracket a_i = m \rrbracket \llbracket y_i = k \rrbracket$ with $a_i$ the cluster assignment. The evidence is then computed with respect to the landmarks,
    \begin{equation*}
        e_k(\vx) = \sum_{m=1}^{M} \kappa(\vz, c_m)\ n_{m,k},
    \end{equation*}
    at a cost of $M \cdot d$ per instance. We study the influence of the number of landmarks $M$ on the train time, test time, and performance of the credal predictor in terms of \ac{ood} detection. We fit the algorithm with $M \in \{45000, 20000, 10000, 5000, 2000, 1000\}$ over three seeds and show the results in \cref{fig:efficiency}. As the number of landmark points $M$ decreases, the fitting step proceeds faster. Note, however, that in terms of total training time (training + fitting) our method was already on par with the most efficient credal prediction method (see \cref{fig:time}). Furthermore, the inference time, while noisy, does not decrease significantly, and, importantly, the AUROC remains largely the same. This highlights that for the scale of typical image benchmark datasets, our method does not require any compression of the reference embeddings.

    \begin{figure}[h]
        \centering
        \includegraphics[width=\linewidth]{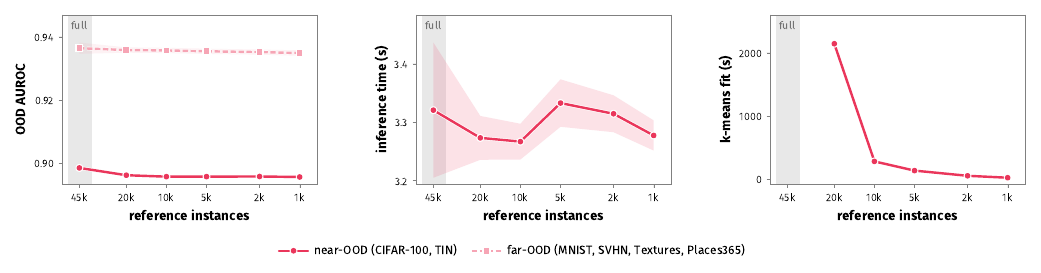}
        \caption{AUROC, fitting time, and inference time (in seconds) of CreWare on CIFAR-10 as a function of the number of landmarks $M \in \{45000, 20000, 10000, 5000, 2000, 1000\}$. Mean and standard deviation over three seeds are shown.}
        \label{fig:efficiency}
    \end{figure}

    \paragraph{Finding the CVaR-minimax distribution.}
    Given a credal set $\cQ$, what remains is to find the \ac{cvar}-minimax prediction \cref{eq:cvar-distribution-set}. For a fixed threshold $v$, the inner supremum in \cref{eq:cvar-distribution-set} is linear in $q$ and can hence be solved exactly: by water-filling for interval-based credal sets --- starting every class at its lower probability and assigning the remaining mass to the classes with the largest hinged loss, each up to its upper probability --- and by enumeration of the vertices for convex hulls. The outer minimization over $p \in \Delta_K$ is convex. It is solved by $300$ iterations of Adam \citep{kingmaAdam2015} with learning rate $0.1$ on the parameterization $p = \softmax(\vz)$, where, by Danskin's theorem, differentiating through the exact inner solution yields the gradient of the worst-case objective. The solver is warm-started at the maximum-entropy element of $\cQ_{\vx}$, which is the exact solution for $v = 0$ under log loss. For the calibration in \cref{alg:threshold} we take $\cV$ to be a grid of $100$ equally spaced values on $[0, v_{\max}]$, with $v_{\max}$ the largest hold-out loss $L_i$ of the maximum-entropy prediction, beyond which the hinge vanishes and $C(v) = v$ can only increase. Since $\tp_v$ does not depend on $\tau$, a single sweep over $\cV$ serves all risk levels.

    \clearpage

    \section{Implementation Details}\label{app:implementation}

    \subsection{Compute Resources}
    We provide an overview of the compute resources used to perform the experiments in \cref{tab:hardware}.
    \begin{table}[h]
        \centering
        \caption{Compute node used for the experiments. Jobs were scheduled with slurm and ran inside
        a Singularity container (PyTorch NGC 26.03). Each job was allocated a single GPU.}
        \label{tab:hardware}
        \small
        \begin{tabular}{ll}
            \toprule
            & \textbf{Resources}                      \\
            \midrule
            GPU         & NVIDIA A100-SXM4, 40\,GB HBM2           \\
            CPU         & 2$\times$ AMD EPYC 7742 (64 cores each) \\
            System RAM  & 1.0\,TB                                 \\
            OS / driver & Ubuntu (Linux 6.8), CUDA driver 580     \\
            \bottomrule
        \end{tabular}
    \end{table}

    \subsection{Data}
    We provide an overview of all datasets including references and respective licenses in \cref{tab:data-licenses}.
    \begin{table}[h]
        \centering
        \caption{Dataset Licenses.}
        \label{tab:data-licenses}
        \begin{tabular}{lll}
            \toprule
            Dataset      & Reference                                    & License                                        \\ \midrule
            CIFAR-10     & \citep{krizhevsky2009cifar}                  & No formal license                              \\
            CIFAR-10-C   & \citep{hendrycksBenchmarkingCorruptions2019} & Creative Commons Attribution 4.0 International \\
            CIFAR-10-H   & \citep{petersonCIFAR10H2019}                 & Creative Commons Attribution 4.0 International \\
            MedMNIST     & \citep{yangMedMNIST2021}                     & Creative Commons Attribution 4.0 International \\
            MedMNIST-C   & \citep{diSalvoMedMNISTC2024}                 & Creative Commons Attribution 4.0 International \\
            CIFAR-100    & \citep{krizhevsky2009cifar}                  & No formal license                              \\
            TinyImageNet & \citep{le2015tiny}                           & Non-commercial research/educational use only   \\
            MNIST        & \citep{lecunMNIST1998}                       & MIT License                                    \\
            SVHN         & \citep{netzer2011svhn}                       & Non-commercial use only                        \\
            Textures     & \citep{cimpoiTextures2014}                   & Research use only                              \\
            Places365    & \citep{zhouPlaces3652018}                    & Non-commercial research and educational use    \\
            \bottomrule
        \end{tabular}
    \end{table}

    \subsection{Models}

    \subsubsection{ResNet18} We use the ResNet18 implementation, which has been adapted to better support CIFAR-10\footnote{\url{https://github.com/kuangliu/pytorch-cifar}}. Specifically, the kernel size of the first convolutional layer has been adapted from $7\times 7$ to $3\times 3$ and the first max-pooling layer is removed to better support the size $32\times 32$ images.

    \paragraph{Hyperparameters.}
    \cref{tab:hyperparameters} shows the hyperparameters for the ResNet18. All models were trained without early stopping and the checkpoint with the lowest validation loss is used.

    \begin{table}[h]
        \centering
        \caption{Hyperparameters for ResNet18.}
        \label{tab:hyperparameters}
        \resizebox{\textwidth}{!}{
            \begin{tabular}{lccccccccc}
                \toprule
                Dataset    & Epochs & Batch Size & Val. Split & Optim. & LR    & Momentum & Weight Dec. & Scheduler            \\ \midrule
                CIFAR-10   & 200    & 128        & 0.1        & SGD    & 0.1   & 0.9      & 5e-4        & Cosine               \\
                BloodMNIST & 100    & 128        & 0.1        & Adam   & 0.001 & 0.0      & 0.0         & Factor 0.1 at 50, 75 \\
                PathMNIST  & 100    & 128        & 0.1        & Adam   & 0.001 & 0.0      & 0.0         & Factor 0.1 at 50, 75 \\
                \bottomrule
            \end{tabular}
        }
    \end{table}

    \subsection{Methods}
    The implementations of the credal prediction methods rely heavily on the \texttt{probly} library\footnote{\url{https://github.com/pwhofman/probly}}. All credal methods use the \texttt{probly} implementation, except for CreDro \citep{wangDro2026}. The base models for all methods are trained using the same hyperparameters listed in \cref{tab:hyperparameters}.

    \subsubsection{Credal Ensembling}
    This method relies on ensembling neural networks \citep{nguyen2025credal}. We use $10$ ensemble members. Given an input, the probabilistic outputs of the different ensemble members are collected. From these, a representative distribution is computed based on some distance, the outputs are ordered by their distance to the representative distribution, and an $\alpha$ portion of the distributions is discarded. We set $\alpha=0.0$ as previous works \citep{lohrCredal2025,hofmanEfficientCredalPrediction2026} showed this performs best on \ac{ood} detection. The credal set is obtained by taking the convex hull of the remaining ($1-\alpha$) portion of distributions. The representative distribution is used when a single distribution has to be predicted.

    \subsubsection{Credal Bayesian Neural Networks}
    This method is constructed by ensembling Bayesian neural networks with different prior distributions. The original paper \citep{caprio2023credal} does not list parameters of the priors, hence we use the ones suggested by \texttt{probly} --- that is, we use $10$ Bayesian neural networks with zero mean Gaussian priors that have standard deviations: 0.1, 0.15, 0.2, 0.3, 0.45, 0.7, 1.0, 1.5, 2.0, and 3.0. From each of the posteriors of the resulting $10$ Bayesian neural networks, $20$ samples are taken, which are averaged, resulting in $10$ conditional distributions. The convex hull of these distributions generates the credal set. The mean of the $10$ (conditional) distributions are then used when a single predictive distribution is required.

    \subsubsection{Credal Wrapper}
    This method is an ensembling method, which generates lower and upper probabilities based on the minimum and maximum class-wise probabilities across the ensemble members \citep{wangCredalWrapper2025}. We use $10$ ensemble members. These probability intervals are then turned into interval-based credal sets. When a single distribution is required, such as in \cref{sec:shift-set-size}, the intersection probability \citep{cuzzolinIntersectionProbability2022} is computed. In the risk-averse experiments, our decision rule is used.

    \subsubsection{Credal Distributionally Robust Optimization}
    The credal distributionally robust optimization method [CreDro] \citep{wangDro2026} has been faithfully re-implemented based on the original implementation\footnote{\url{https://github.com/Kaizheng-WANG/Learning-Credal-Ensembles-via-Distributionally-Robust-Optimization}}. The method relies on ensembling models that are trained with a loss function that optimizes based on only the $\delta_i$ portion of worst losses in a batch, where $\delta_i = \frac{1-\delta_G}{M - 1}(i-1) + \delta_G$ is varied across members. We use $M = 10$ members and $\delta_G = 0.5$, as this was shown to be the best-performing value in the original paper. The lower and upper probabilities are based on the minimum and maximum class-wise probabilities across the ensemble members, and the probability intervals are turned into interval-based credal sets. This interval-based credal set may in turn be aggregated into the intersection probability when a single distribution is required.

    \subsubsection{Credal Relative Likelihood}
    This method also builds an ensemble using a specific initialization technique to promote diversity and an early stopping technique to reach the relative likelihood threshold $\alpha$ \citep{lohrCredal2025}. We use $10$ ensemble members and $\alpha=0.95$, as this was shown to perform best on \ac{ood} detection in the original paper. In case a single distribution has to be predicted, the MLE prediction is used.

    \subsubsection{Efficient Credal Prediction}
    This method uses a base predictor and constructs probability intervals by observing how far logits can be perturbed while the resulting model stays within the relative likelihood budget $\alpha$ \citep{hofmanEfficientCredalPrediction2026}. We use $\alpha=0.95$ as this was shown to be the best performing setting for \ac{ood} detection. The resulting logit bounds are converted into probability intervals from which an interval-based credal set is constructed. The MLE predictions are used when a single distribution is required.

    \subsubsection{SQwash}
    This method makes the training objective itself risk-averse by replacing the mean of the mini-batch losses with their superquantile, that is, the empirical \ac{cvar} of the batch losses \citep{laguelSuperquantileBasedLearning2022}. In each training step, only the worst $\tau$-fraction of the cross-entropy losses within the batch is averaged. We use the implementation of the loss provided by the authors\footnote{\url{https://github.com/krishnap25/sqwash}}. In contrast to the credal methods, the result is a single network that outputs precise conditional distributions.

    \subsubsection{AdaCVaR}
    This method minimizes the \ac{cvar} of the loss distribution through adaptive sampling \citep{curiAdaptiveSampling2020}. Training is framed as a game between the learner and an adversarial sampler in which an Exp3-type bandit algorithm maintains weights per instance approximating a $k$-DPP over the $k = \lceil \tau N \rceil$ training instances with the largest losses. Then, in each step, a subset of size $k$ is sampled accordingly, and a mini-batch is drawn uniformly from this subset. The learner minimizes the mean cross-entropy on these batches and the sampler weights are updated on the observed losses. The method has been faithfully re-implemented based on the original implementation\footnote{\url{https://github.com/sebascuri/adacvar}}, using the sampler learning rate $\eta = \sqrt{\log(1/\tau)/(\tau T)}$ suggested in the original paper, where $T$ denotes the total number of training steps. Similar to SQwash, the result is a single network with precise conditional distributions as outputs.

    \clearpage

    \section{Experimental Details}\label{app:experimental}

    \subsection{Distribution Shift}\label{app:exp-distribution-shift}
    We use a ResNet18 trained on the BloodMNIST and CIFAR-10 datasets. \cref{tab:hyperparameters} shows the hyperparameters. The shifted test sets are taken from the corresponding corruption benchmarks, MedMNIST-C for BloodMNIST and CIFAR-10-C for CIFAR-10, providing $11$ and $15$ corruption types, respectively, each at five severity levels. Severity $0$ refers to the clean test split of the respective dataset. The MedMNIST-C images ship at a resolution of $224 \times 224$ and are downsampled to the native $28 \times 28$ resolution the models are trained on, after which the same (test) transformations as for the clean data are applied. The CIFAR-10-C images are already at native resolution. For every corrupted test set we compute the credal set size, that is, the width of the class-wise probability intervals averaged over classes and instances, where for the methods producing a convex hull the intervals are given by the lower and upper envelope of that hull. Within a seed the set size is first averaged over the corruption types of a severity level and the curves show the mean and standard deviation over three seeds. In addition, we report the accuracy of the regular point prediction made by the respective method, which serves as a reference for the degradation of the underlying model.

    \subsection{Out-of-Distribution Detection}\label{app:exp-ood}
    We use a ResNet18 trained on CIFAR-10 and one trained on BloodMNIST for the presented \ac{ood} detection tasks. \cref{tab:hyperparameters} shows the hyperparameters. The task is framed as a binary classification task where the epistemic uncertainty is used to discriminate between \ac{id} and \ac{ood} instances, correspondingly the \ac{auroc} is computed. All \ac{ood} detection experiments are run three times and the mean and standard deviation are reported. The test splits of the respective datasets are used and both \ac{id} and \ac{ood} instances receive the same (test) transformations, except for the grayscale images, which receive an additional transformation duplicating the grayscale values into three channels. In order to make the binary classification task balanced, the minimum of the number of samples of both datasets is taken. The CIFAR-10 experiments follow the setup from \citet{zhangOpenOOD2024}.

    \subsubsection{MedMNIST}
    For the MedMNIST collection, we use BloodMNIST as the \ac{id} dataset, which is a collection of RGB images of blood cells. Following the setup used in CIFAR-10, we use two near-\ac{ood} datasets: PathMNIST and TissueMNIST. Both datasets show cells or tissue under a microscope, meaning low level image statistics should be comparable. The DermaMNIST, RetinaMNIST, OCTMNIST, and BreastMNIST are used as the far-\ac{ood} datasets.

    \subsection{Coverage versus Efficiency}\label{app:exp-coverage-efficiency}
    We use the ResNet18 models trained on CIFAR-10 (\cref{tab:hyperparameters}) and treat the human soft labels of CIFAR-10-H \citep{petersonCIFAR10H2019} as approximations of the ground-truth conditional distributions on the test set. Coverage is the fraction of test instances whose ground-truth distribution lies in the predicted credal set. Membership is tested exactly for each representation, by containment in the class-wise intervals for interval-based sets and by linear programming for convex sets. Efficiency is one minus the mean width of the credal set, so that higher values indicate tighter sets. Both are properties of the credal set alone, involving no decision rule or loss. For the methods parameterized by a plausibility level, the curve is traced over $\alpha \in \{0, 0.2, 0.4, 0.6, 0.8, 0.9, 0.95, 1\}$, whereas the ensemble-based methods yield a single point. All results are averaged over $3$ seeds. Note, at $\alpha = 0$ the log-likelihood threshold $\log \alpha$ is not defined, so this value is understood as the limiting case in which every hypothesis is deemed plausible and the credal set is the whole simplex.

    \subsection{Risk-Aversion under Distribution Shift}\label{app:exp-risk-aversion-shift}
    The experiments on risk-averse prediction under distribution shift are performed using the ResNet18 base model for each method. For all datasets, the training data and the clean test data are taken from MedMNIST. The models are trained on $90\%$ of the training data, with the remaining $10\%$ held out as validation data. The risk-averse baselines are trained with the same $\tau$-level as the \ac{cvar} under which they are evaluated.

    For the MLE and the risk-averse baselines, the predictive distribution produced by the model is taken as the prediction itself. When the MLE model is combined with the \ac{cvar} rule \cref{eq:cvar-distribution}, the threshold $v$ is first determined on the (clean) validation data, and the prediction are then derived as described in \cref{sec:minimizing-cvar}.
    The credal methods, including CreWare, follow the same scheme: the threshold $v$ is computed on the validation data as in \cref{alg:threshold} and predictions are obtained from the \ac{cvar}-minimax decision rule \cref{eq:cvar-distribution-set}.
    In both bases, the decision rule is applied with the same $\tau$ as the \ac{cvar} under which the predictions are evaluated.
    As a reference, we also include a uniform prediction baseline, which predicts a uniform distribution regardless of the input, in order to detect trivial predictions from a method.

    The corrupted test data come from MedMNIST-C, which specifies a set of modality-specific corruptions for each dataset at five levels of severity. For a given severity level, all methods are evaluated by applying each of these corruptions and averaging over corruption types, resulting in a single loss value per severity level. This aligns with the setup in \citep{ovadiaDatasetShift2019}.
    The performance of a method is then measured in terms of the \ac{cvar} at $\tau$-level of the log-loss and the Brier-loss, computed separately on each corrupted test set prior to averaging.

    We report the mean and standard deviation over three seeds.

    \subsection{Cost-Sensitive Decision Making}\label{app:cost-sensitive}

    \paragraph{Setup.}
    The cost matrix is given in \cref{tab:triage-cost}. Sending a septic patient home is the (uniquely) worst action for its condition and hence the most catastrophic decision, following the critical-mistake notion of \citet{kiyani2025decision}.
    Mistreating a flu patient by sending them home or to ICU costs $4$, so neither is catastrophic. Precaution costs $6$--$8$ on a healthy patient, so admitting everyone to ICU realizes a \ac{cvar} of exactly $8$ at every $\tau$ considered.

    \begin{table}[h]
        \centering
        \caption{Cost matrix of the triage problem (rows: true condition, columns: action).}
        \label{tab:triage-cost}
        \begin{tabular}{@{}lccc@{}}
            \toprule
            & \textbf{Send home} & \textbf{Treat flu} & \textbf{ICU} \\ \midrule
            Healthy & $0$                & $6$                & $8$          \\
            Flu     & $4$                & $1$                & $4$          \\
            Sepsis  & $20$               & $14$               & $2$          \\ \bottomrule
        \end{tabular}
    \end{table}

    \paragraph{Data.}
    Instances have two informative features, drawn from the Gaussian mixture in \cref{tab:triage-dgp}, and six additional features drawn i.i.d.\ from $\mathcal{N}(0,1)$ representing noise. The class prior is $0.55$ / $0.25$ / $0.20$ (healthy / flu / sepsis). Typical healthy, flu, and septic patients form three separated clusters. A quarter of all septic patients are masked, that is, they present with almost healthy features and overlap the typical healthy subgroup. In addition, $15\%$ of the healthy and $15\%$ of the septic patients belong to a subgroup with an atypical presentation, far from all other components. The training data are scarce for the atypical septic patients, of which only $5\%$ are available in training ($\approx 6$ of $120$ instances). All other components, including the healthy patients of the subgroup, are used in full, so the training data show the atypical presentation almost exclusively as healthy. Validation and test data keep the natural frequencies. The splits contain $4000$ / $2000$ / $8000$ instances.

    \begin{table}[h]
        \centering
        \caption{Components of the data-generating process (informative coordinates). Shares refer to the natural frequencies of the validation and test data. Within each condition they sum to $100\%$, as do the shares of all patients. The last column gives the fraction of each component that is used in training.}
        \label{tab:triage-dgp}
        \begin{tabular}{@{}llccccc@{}}
            \toprule
            & & & & \multicolumn{2}{c}{\textbf{Share of}} & \textbf{Used in} \\ \cmidrule(lr){5-6}
            \textbf{Condition} & \textbf{Presentation}      & \textbf{Mean} & \textbf{Std.} & \textbf{condition} & \textbf{all patients} & \textbf{training}    \\ \midrule
            Healthy            & Typical                    & $(0, 0)$      & $1.0$         & $85\%$             & $46.75\%$             & $100\%$           \\
            & Atypical (subgroup)        & $(-6, 4)$     & $0.8$         & $15\%$             & $8.25\%$              & $100\%$           \\ \addlinespace
            Flu                & Typical                    & $(3, 0)$      & $1.0$         & $100\%$            & $25.00\%$             & $100\%$           \\ \addlinespace
            Sepsis             & Typical                    & $(3, 3)$      & $0.8$         & $60\%$             & $12.00\%$             & $100\%$           \\
            & Masked (resembles healthy) & $(-1.8, 1.8)$ & $0.5$         & $25\%$             & $5.00\%$              & $100\%$           \\
            & Atypical (subgroup)        & $(-9, 7)$     & $1.5$         & $15\%$             & $3.00\%$              & \textbf{5\%}      \\ \bottomrule
        \end{tabular}
    \end{table}

    \paragraph{Agents.}
    All point predictors share the same backbone which is an MLP with two hidden layers of width $64$, trained with Adam (learning rate $10^{-3}$, batch size $128$, $150$ epochs) on standardized inputs. The MLE decides by the risk-neutral best response. SQwash \citep{laguelSuperquantileBasedLearning2022} (smoothed reducer, smoothing $1.0$) and AdaCVaR \citep{curiAdaptiveSampling2020} are trained at a tail level equal to the evaluation $\tau$. CreWra uses $10$ ensemble members. Our predictor is fit as in \cref{app:practical-implementation} on the standardized inputs, with $200$ neighbors, bandwidth $0.3$ times the median $200$th-neighbor distance, and $\alpha = 0.95$. Thresholds are calibrated per method, seed, and $\tau$ on the validation split. We report the empirical \ac{cvar} \cref{eq:cvar-empirical} of the realized test costs, the mean cost, and the fraction of septic patients sent home, over $10$ seeds.

    \paragraph{Decision rule.}
    For a finite action set $\mathcal{A}$ and a cost function $c : \cY \times \mathcal{A} \to \bR_{\geq 0}$, the rule \cref{eq:cvar-distribution-set} becomes
    \begin{equation}\label{eq:cvar-action-set}
        \tilde{a}(\vx) = \argmin_{a \in \mathcal{A}}\ \sup_{q \in \cQ_{\vx}} \bE_{Y \sim q}\left[(c(Y, a) - v)_+\right],
    \end{equation}
    which is exact by enumeration. $v$ is calibrated by \cref{alg:threshold} with realized costs in place of losses.

    \subsection{Risk-Averse Reinforcement Learning}\label{app:rl-details}

    \paragraph{Setup.}
    The $11 \times 13$ grid in \cref{fig:driving} has $64$ free cells, six of which are hazard cells forming three rows by two columns on the direct road. The car moves deterministically in four directions and stays put at walls. Entering a hazard cell crashes with probability $p = 0.1$. Every step costs $r_s = -1$, a crash ends the episode with $r_c = -8$ in place of the step cost, and reaching the goal pays $r_g = +40$ in addition to it. The direct route ($8$ steps, three hazard cells) crashes with probability $1-(1-p)^3 \approx 0.27$ and attains an expected reward of $20.4$, whereas the safe detour ($26$ steps) always returns a reward of $14$.

    \paragraph{Data and predictions.}
    Every non-hazard cell is observed $n = 1000$ times without a crash, every hazard cell $k$ times with $\mathrm{Binomial}(k, p)$ crashes, for $k \in \{1, 2, 3, 4, 6, 8, 13, 20, 30, 40, 50\}$. From the counts $(k_s, c_s)$ of a cell $s$, the point estimate is $\hat p_s = c_s / k_s$, and $\cQ_s = [\lowp_s, \uppp_s]$ is the $\alpha$-cut of the Bernoulli relative likelihood,
    \begin{equation*}
        \cQ_s = \left\{p \in [0,1] : \frac{p^{c_s} (1-p)^{k_s - c_s}}{\hat p_s^{\,c_s} (1-\hat p_s)^{k_s - c_s}} \geq \alpha \right\},
    \end{equation*}
    with $\alpha = 0.15$. Without observed crashes, $\hat p_s = 0$ while $\uppp_s = 1 - \alpha^{1/k_s}$, that is, $0.85$ after one, $0.21$ after eight, and $0.09$ after twenty observations.

    The non-hazard cells carry residual epistemic uncertainty as well, with $\uppp_s = 1 - \alpha^{1/n} \approx 0.0019$, which accumulates to a worst-case crash probability of $0.046$ over the $25$ intermediate cells of the detour. The detour is preferred under the worst case only if these cells are sampled sufficiently often.

    \paragraph{Agents.}
    MLE maximizes the expected return under $\hat p$ by value iteration on the Bellman equation \citep{sutton1998reinforcement}
    \begin{equation}\label{eq:bellman-rl}
        Q(s,a)=
        \begin{cases}
            r_s + r_g, & s'=g,\\[2pt]
            p_{s'}\,r_c+(1-p_{s'})(r_s+ V_{s'}), & s'\neq g,
        \end{cases}
    \end{equation}
    with $V_s = \max_a Q(s,a)$. Credal\plus rule applies \cref{eq:cvar-distribution-set} to the negative return $L^\pi = -G^\pi$ of a rollout. For a fixed threshold $v$, the rule selects the policy that minimizes the worst-case hinged loss $\sup_{q} \bE_q[(L^\pi - v)_+]$ over the cell-wise credal sets. In the supervised setting, $v$ is calibrated on the realized losses of hold-out data according to \cref{alg:threshold}. An agent that plans offline has no realized returns of a candidate policy at its disposal, so $v$ is instead optimized jointly with the policy under the worst-case model,
    \begin{equation}\label{eq:cvar-rl}
        \hat\pi=\argmin_\pi\ \min_{v}\ \Big\{ v + \frac{1}{\tau} \sup_{q(\cdot)\,\in\,\cQ_{(\cdot)}} \bE_q\big[(L^\pi - v)_+\big] \Big\},
    \end{equation}
    with $\alpha = 0.15$ and $\tau = 0.1$. The plan thus depends on the observed counts only. By the variational form \cref{eq:variational-cvar}, the objective is an upper bound on the worst-case risk $\sup_q \cvar_\tau(L^\pi)$, as the minimum over $v$ is taken outside the supremum. The supremum itself is not obtained at the upper bounds $\uppp_s$ in general. A crash ends the episode at a loss that is lower than that of any later crash, hence a larger crash probability in one cell does not stochastically increase $L^\pi$. However, the hinged expectation is linear in the crash probability of each cell, and the credal sets are cell-wise, so the supremum is attained at an endpoint of each $\cQ_s$, which is selected by the recursion below. MLE\plus rule replaces the credal sets by $\hat p_s$. It is the exact optimum, under the estimated model, of the \ac{cvar} objective that risk-averse policy-gradient methods such as CeSoR \citep{greenbergRiskAverseRL2022} optimize by sampling.

    \paragraph{Budget-augmented value iteration.}
    In \cref{eq:cvar-rl}, the inner problem at fixed $v$ is solved on the state $(x, s)$ augmented by the step count $s$, which equals the accumulated loss \citep{bauerle2011markov}. As $s$ grows by one per step, a single backward pass over $s = s_{\max}-1, \ldots, 0$ suffices, with $W_{s_{\max}} \equiv \infty$ and
    \begin{equation*}
        W_s(x) = \min_{a} \begin{cases}
                              (s + 1 - r_g - v)_+, & x' = g, \\[2pt]
                \max\limits_{q \in \{\lowp_{x'}, \uppp_{x'}\}} q\,(s - r_c - v)_+ + (1 - q)\, W_{s+1}(x'), & x' \neq g,
        \end{cases}
    \end{equation*}
    where $x'$ is the successor of $x$ under $a$. The maximum picks $\uppp_{x'}$ where a crash in $x'$ costs more than continuing and $\lowp_{x'}$ otherwise. We take $s_{\max} = 64$, the number of free cells, which is exact since revisiting a cell is never optimal. The outer minimization scans $v$ in increasing order over the attainable losses, which are integers, and evaluates the objective of \cref{eq:cvar-rl} for the route planned at each $v$. As this objective is at least $v$, the scan stops once $v$ exceeds the best value found. Routes that tie in the hinged loss are ordered by their expected loss.

    \paragraph{Evaluation.}
    The evaluation is done using $150$ seeds, each drawing a dataset from which every agent plans a route that is then driven $1000$ times under the true crash probabilities.
    The crash rate and mean reward are averaged over seeds, with bands of one standard deviation, whereas the CVaR reward is the population $\cvar_{0.1}$ of all pooled returns, the marginal \ac{cvar} of \cref{app:variants}, and therefore has no band. Averaged over $k$, the MLE attains a mean reward of $19.1$ at a crash rate of $0.27$, below the $20.4$ of the direct route. For small $k$, it avoids hazard cells with an observed crash, which leads it through a fourth hazard cell on a $10$-step route in some seeds and onto the detour in a few others.

    \clearpage

    \section{Additional Results}\label{app:additional-results}

    \subsection{Credal Sets under Distribution Shifts}\label{app:distributional-shifts}
    We provide additional results with the credal predictors trained on the CIFAR-10 dataset using the corruptions from CIFAR-10-C. We use all corruptions and severities and report the set size and accuracy averaged over the corruptions for each severity level. We show the results in \cref{fig:dist-shift-cifar}. CreWare adapts suitably to the distribution shifts, increasing its sets accordingly with the diminishing accuracy.

    \begin{figure}[h]
        \centering
        \includegraphics[width=0.5\linewidth]{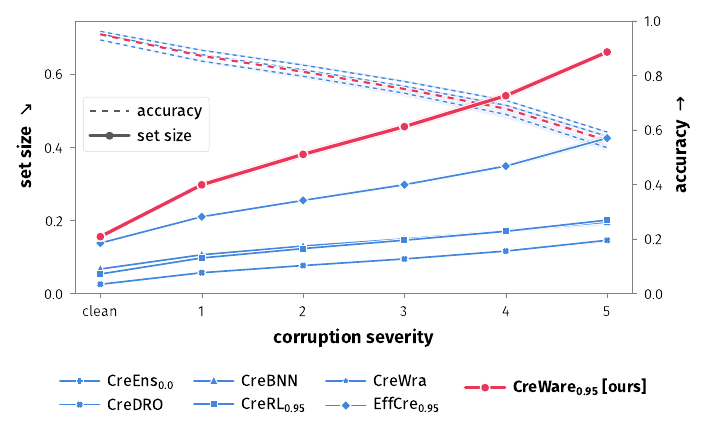}
        \caption{Distribution shift on CIFAR-10. The mean and standard deviation of the set size and accuracy are reported. The sets of CreWare increase accordingly with the diminishing accuracy, while the other sets increase less in size.}
        \label{fig:dist-shift-cifar}
    \end{figure}

    \subsection{Coverage-Efficiency of Credal Predictor}\label{app:coverage-efficiency}
    We evaluate the coverage and efficiency of our novel credal predictor on the CIFAR-10 dataset, using the human annotations provided by CIFAR-10-H \citep{petersonCIFAR10H2019}, which we consider as (approximations of) the ground-truth distributions. \cref{fig:cov_eff} shows the results.

    \begin{figure}[h]
        \centering
        \includegraphics[width=0.4\linewidth]{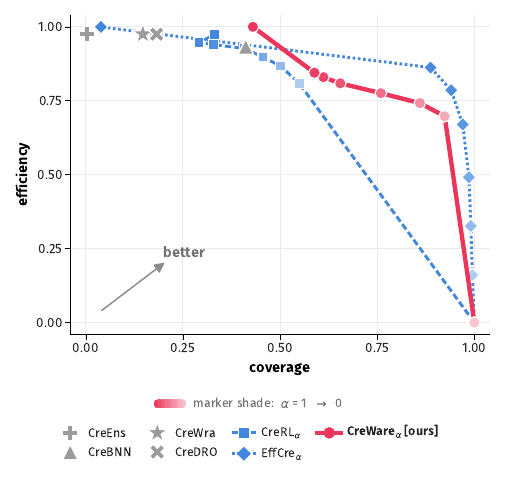}
        \caption{Coverage versus efficiency for CIFAR-10-H. The CreWare method performs competitively, staying in the high coverage region while having a good efficiency.}
        \label{fig:cov_eff}
    \end{figure}

    \subsection{Out-of-distribution Detection}\label{app:ood-results}
    In addition to the experiments on the CIFAR-10 benchmark, we perform further \ac{ood} detection experiments in order to evaluate our credal predictor.

    \cref{tab:ood-full} first completes \cref{tab:ood} from the main text with the far-\ac{ood} datasets MNIST \citep{lecunMNIST1998} and Textures \citep{cimpoiTextures2014}.

    \begin{table*}[h]
        \centering
        \caption{\Acl*{ood} detection with CIFAR-10 on all near-\acs*{ood} and far-\acs*{ood} datasets. The mean and standard deviation of the \acs*{auroc} over three runs are reported. The best method is shown in \textbf{bold}. Subscripts denote the method-specific parameter $\alpha$ controlling the set size.}
        \label{tab:ood-full}
        \resizebox{\textwidth}{!}{
            \begin{tabular}{@{}lcc|cccc@{}}
                \toprule
                & \multicolumn{2}{c}{\textbf{Near-OOD}} & \multicolumn{4}{c}{\textbf{Far-OOD}} \\
                \textbf{Method}                         & \textbf{CIFAR-100}                      & \textbf{TinyImageNet}                   & \textbf{MNIST}                          & \textbf{SVHN} & \textbf{Textures} & \textbf{Places365} \\ \midrule
                $\text{CreEns}_{0.0}$                   & $0.917 \scriptstyle \pm 0.000$          & $0.911 \scriptstyle \pm 0.001$ & $0.961 \scriptstyle \pm 0.002$ & $0.970 \scriptstyle \pm 0.003$ & $0.948 \scriptstyle \pm 0.000$ & $0.919 \scriptstyle \pm 0.003$ \\
                CreBNN                                  & $0.867 \scriptstyle \pm 0.007$          & $0.866 \scriptstyle \pm 0.006$          & $0.912 \scriptstyle \pm 0.024$ & $0.892 \scriptstyle \pm 0.016$ & $0.894 \scriptstyle \pm 0.011$ & $0.872 \scriptstyle \pm 0.006$ \\
                CreWra                                  & $0.918 \scriptstyle \pm 0.000$          & $0.912 \scriptstyle \pm 0.001$          & $0.961 \scriptstyle \pm 0.002$ & $0.971 \scriptstyle \pm 0.003$ & $0.950 \scriptstyle \pm 0.001$ & $0.920 \scriptstyle \pm 0.003$ \\
                CreDro                                  & $0.918 \scriptstyle \pm 0.001$          & $0.911 \scriptstyle \pm 0.001$          & $0.965 \scriptstyle \pm 0.003$ & $0.972 \scriptstyle \pm 0.001$ & $0.950 \scriptstyle \pm 0.001$ & $0.920 \scriptstyle \pm 0.003$ \\
                $\text{CreRL}_{0.95}$                   & $0.913 \scriptstyle \pm 0.003$          & $0.908 \scriptstyle \pm 0.001$ & $0.953 \scriptstyle \pm 0.013$ & $0.969 \scriptstyle \pm 0.007$ & $0.944 \scriptstyle \pm 0.001$ & $0.917 \scriptstyle \pm 0.003$ \\
                $\text{EffCre}_{0.95}$                  & $0.859 \scriptstyle \pm 0.001$          & $0.857 \scriptstyle \pm 0.001$ & $0.905 \scriptstyle \pm 0.006$ & $0.911 \scriptstyle \pm 0.006$ & $0.890 \scriptstyle \pm 0.001$ & $0.871 \scriptstyle \pm 0.003$ \\
                $\text{CreWare}_{0.95}$ [\textbf{ours}] & $\mathbf{0.931 \scriptstyle \pm 0.002}$ & $\mathbf{0.929 \scriptstyle \pm 0.001}$ & $\mathbf{0.980 \scriptstyle \pm 0.005}$ & $\mathbf{0.980 \scriptstyle \pm 0.005}$ & $\mathbf{0.970 \scriptstyle \pm 0.004}$ & $\mathbf{0.941 \scriptstyle \pm 0.002}$ \\ \bottomrule
            \end{tabular}
        }
    \end{table*}

    \subsubsection{Entropy-Based versus Set-Size-Based Measures}\label{app:ood-measures}
    The measure of epistemic uncertainty \cref{eq:epistemic-uncertainty} by \citet{abellanUncertaintyMeasure2006} has recently been criticized for not fulfilling certain desirable properties \citep{huellermeierCredalUncertainty2022}. In particular, the measure is not shift invariant and, consequently, the epistemic uncertainty of a credal set will decrease if the credal set is shifted towards the center of the probability simplex (the uniform distribution) even though the probability intervals keep the same length. Therefore, we additionally perform experiments with the same setup, but using the set size of the credal set \cref{eq:set-size} as a discriminator for each method. \cref{tab:ood-set-size} shows the results. In addition to outperforming every method with the entropy-based measure of uncertainty, CreWare also outperforms the methods across near-\ac{ood} and far-\ac{ood} when the set size is used.

    \begin{table*}[h]
        \centering
        \caption{\Acl*{ood} detection with CIFAR-10 using the set size as discriminator. The mean and standard deviation of the \acs*{auroc} over three runs are reported. The best method is shown in \textbf{bold}. CreWare outperforms other methods across near-\acs*{ood} and far-\acs*{ood} datasets.}
        \label{tab:ood-set-size}
        \resizebox{\textwidth}{!}{
            \begin{tabular}{@{}lcc|cccc@{}}
                \toprule
                & \multicolumn{2}{c}{\textbf{Near-OOD}} & \multicolumn{4}{c}{\textbf{Far-OOD}} \\
                \textbf{Method}                         & \textbf{CIFAR-100}                      & \textbf{TinyImageNet}                   & \textbf{MNIST}                          & \textbf{SVHN} & \textbf{Textures} & \textbf{Places365} \\ \midrule
                $\text{CreEns}_{0.0}$                   & $0.911 \scriptstyle \pm 0.001$          & $0.903 \scriptstyle \pm 0.001$ & $0.946 \scriptstyle \pm 0.002$ & $0.955 \scriptstyle \pm 0.004$ & $0.940 \scriptstyle \pm 0.002$ & $0.909 \scriptstyle \pm 0.003$ \\
                CreBNN                                  & $0.872 \scriptstyle \pm 0.005$          & $0.869 \scriptstyle \pm 0.004$          & $0.905 \scriptstyle \pm 0.010$ & $0.907 \scriptstyle \pm 0.011$ & $0.901 \scriptstyle \pm 0.010$ & $0.872 \scriptstyle \pm 0.005$ \\
                CreWra                                  & $0.911 \scriptstyle \pm 0.001$          & $0.903 \scriptstyle \pm 0.001$          & $0.946 \scriptstyle \pm 0.002$ & $0.955 \scriptstyle \pm 0.004$ & $0.940 \scriptstyle \pm 0.002$ & $0.909 \scriptstyle \pm 0.003$ \\
                CreDro                                  & $0.911 \scriptstyle \pm 0.001$          & $0.903 \scriptstyle \pm 0.001$          & $0.949 \scriptstyle \pm 0.002$ & $0.960 \scriptstyle \pm 0.004$ & $0.941 \scriptstyle \pm 0.002$ & $0.909 \scriptstyle \pm 0.003$ \\
                $\text{CreRL}_{0.95}$                   & $0.909 \scriptstyle \pm 0.003$          & $0.901 \scriptstyle \pm 0.000$ & $0.935 \scriptstyle \pm 0.018$ & $0.960 \scriptstyle \pm 0.008$ & $0.934 \scriptstyle \pm 0.003$ & $0.905 \scriptstyle \pm 0.002$ \\
                $\text{EffCre}_{0.95}$                  & $0.879 \scriptstyle \pm 0.002$          & $0.880 \scriptstyle \pm 0.003$ & $0.945 \scriptstyle \pm 0.014$ & $0.942 \scriptstyle \pm 0.012$ & $0.921 \scriptstyle \pm 0.005$ & $0.898 \scriptstyle \pm 0.002$ \\
                $\text{CreWare}_{0.95}$ [\textbf{ours}] & $\mathbf{0.926 \scriptstyle \pm 0.002}$ & $\mathbf{0.924 \scriptstyle \pm 0.002}$ & $\mathbf{0.976 \scriptstyle \pm 0.006}$ & $\mathbf{0.976 \scriptstyle \pm 0.006}$ & $\mathbf{0.966 \scriptstyle \pm 0.004}$ & $\mathbf{0.937 \scriptstyle \pm 0.002}$ \\ \bottomrule
            \end{tabular}
        }
    \end{table*}

    \subsubsection{MedMNIST}\label{app:ood-medmnist} We perform \ac{ood} detection using datasets from the MedMNIST collection. In the medical domain in particular, it is critical that a credal predictor can differentiate \ac{id} instances from \ac{ood} instances to ensure that no (risk-averse) predictions are made on data it has not seen during training. We train a ResNet18 on BloodMNIST, and use PathMNIST and TissueMNIST as near-\ac{ood} datasets, while DermaMNIST, RetinaMNIST, OCTMNIST, and BreastMNIST serve as the far-\ac{ood} datasets. \cref{tab:ood-medmnist} shows the results. CreWare outperforms the other methods across near-\ac{ood} and far-\ac{ood} datasets. \ac{ood} detection with this suite of datasets proves to be a particularly difficult task, with many methods performing worse than random, that is, conveying higher epistemic uncertainty for \ac{id} instances than for \ac{ood} instances.

    \begin{table*}[h]
        \centering
        \caption{\Acl*{ood} detection with BloodMNIST. The mean and standard deviation of the \acs*{auroc} over three runs are reported. The best method is shown in \textbf{bold}. CreWare outperforms other methods across near-\acs*{ood} and far-\acs*{ood} datasets.}
        \label{tab:ood-medmnist}
        \resizebox{\textwidth}{!}{
            \begin{tabular}{@{}lcc|cccc@{}}
                \toprule
                & \multicolumn{2}{c}{\textbf{Near-OOD}} & \multicolumn{4}{c}{\textbf{Far-OOD}} \\
                \textbf{Method}                         & \textbf{PathMNIST}                      & \textbf{TissueMNIST}                    & \textbf{DermaMNIST}                     & \textbf{RetinaMNIST} & \textbf{OCTMNIST} & \textbf{BreastMNIST} \\ \midrule
                $\text{CreEns}_{0.0}$                   & $0.780 \scriptstyle \pm 0.011$          & $0.497 \scriptstyle \pm 0.008$ & $0.681 \scriptstyle \pm 0.016$ & $0.089 \scriptstyle \pm 0.009$ & $0.219 \scriptstyle \pm 0.014$ & $0.043 \scriptstyle \pm 0.001$ \\
                CreBNN                                  & $0.744 \scriptstyle \pm 0.043$          & $0.009 \scriptstyle \pm 0.015$          & $0.788 \scriptstyle \pm 0.015$ & $0.337 \scriptstyle \pm 0.143$ & $0.000 \scriptstyle \pm 0.001$ & $0.001 \scriptstyle \pm 0.002$ \\
                CreWra                                  & $0.773 \scriptstyle \pm 0.015$          & $0.440 \scriptstyle \pm 0.100$          & $0.637 \scriptstyle \pm 0.023$ & $0.067 \scriptstyle \pm 0.003$ & $0.180 \scriptstyle \pm 0.045$ & $0.025 \scriptstyle \pm 0.001$ \\
                CreDro                                  & $0.761 \scriptstyle \pm 0.022$          & $0.362 \scriptstyle \pm 0.065$          & $0.603 \scriptstyle \pm 0.047$ & $0.062 \scriptstyle \pm 0.002$ & $0.143 \scriptstyle \pm 0.014$ & $0.024 \scriptstyle \pm 0.000$ \\
                $\text{CreRL}_{0.95}$                   & $0.573 \scriptstyle \pm 0.021$          & $0.500 \scriptstyle \pm 0.000$ & $0.288 \scriptstyle \pm 0.006$ & $0.105 \scriptstyle \pm 0.000$ & $0.226 \scriptstyle \pm 0.000$ & $0.043 \scriptstyle \pm 0.000$ \\
                $\text{EffCre}_{0.95}$                  & $0.730 \scriptstyle \pm 0.010$          & $0.734 \scriptstyle \pm 0.094$ & $0.698 \scriptstyle \pm 0.055$ & $0.258 \scriptstyle \pm 0.053$ & $0.445 \scriptstyle \pm 0.086$ & $0.095 \scriptstyle \pm 0.008$ \\
                $\text{CreWare}_{0.95}$ [\textbf{ours}] & $\mathbf{0.906 \scriptstyle \pm 0.065}$ & $\mathbf{0.999 \scriptstyle \pm 0.001}$ & $\mathbf{0.962 \scriptstyle \pm 0.017}$ & $\mathbf{0.995 \scriptstyle \pm 0.005}$ & $\mathbf{0.998 \scriptstyle \pm 0.002}$ & $\mathbf{0.983 \scriptstyle \pm 0.014}$ \\
                \bottomrule
            \end{tabular}
        }
    \end{table*}

    \subsection{Computation Time Comparison}\label{app:time}
    In addition to adapting to distribution shifts and \ac{ood} instances, a credal predictor should be efficient if it is to be used with increasingly large models. Whether it is by training an ensemble or by modifying the backbone and its training, approaches that cannot operate post-hoc on a single pre-trained network become prohibitive as backbones grow.

    Therefore, we record the training time on CIFAR-10, as well as the inference time for producing credal sets on all test instances, across the credal baselines. The comparison thus includes the one-time training cost and the per-instance (test) cost. We refer to \cref{app:implementation} for details about the compute resources on which all measurements are run.

    \cref{fig:time} shows the recorded times in seconds, averaged over three runs. Since fitting CreWare amounts to training a single backbone followed by one encoder pass over the training data, it is an order of magnitude cheaper than the ensemble-based methods (CreEns, CreWra, CreDro, CreRL), which pay the full cost of training their members, and more than twenty times cheaper than CreBNN. Only EffCre, likewise built on a single network, trains at a comparable cost. At test time, every method except CreBNN produces its credal sets within a few seconds on the full test split.
    \begin{figure}[h]
        \centering
        \includegraphics[width=0.9\linewidth]{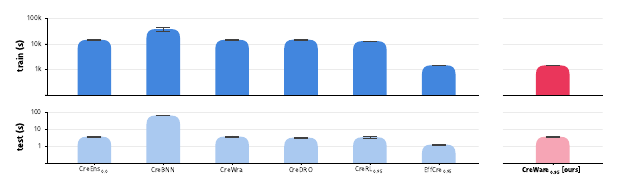}
        \caption{Train and test time in seconds (log-scale) on the full train and test splits of CIFAR-10. The mean and standard deviation over three runs are reported.}
        \label{fig:time}
    \end{figure}

    \clearpage

    \subsection{Risk-Aversion under Distribution Shift}\label{app:res-risk-aversion-shift}
    We present additional results on risk-averse prediction under distribution shifts using the log-loss and the Brier-loss including an additional dataset, and provide an ablation on the effect of the $\alpha$-parameter in our credal predictor.

    \subsubsection{Log-loss}
    First, we complete \cref{fig:risk-averse-shift} in \cref{sec:empirical-results}, with additional $\tau$-values, resulting in $\tau \in \{0.01, 0.025, 0.05, 0.1, 0.2\}$.  \cref{fig:shift_risk_pathmnist_log_loss} shows the complete results for the PathMNIST dataset. In addition, we perform the same experiment using the BloodMNIST dataset, which is also included in the MedMNIST collection. The results are shown in \cref{fig:shift_risk_bloodmnist_log_loss}.

    \begin{figure}[h]
        \centering
        \includegraphics[width=0.8\linewidth]{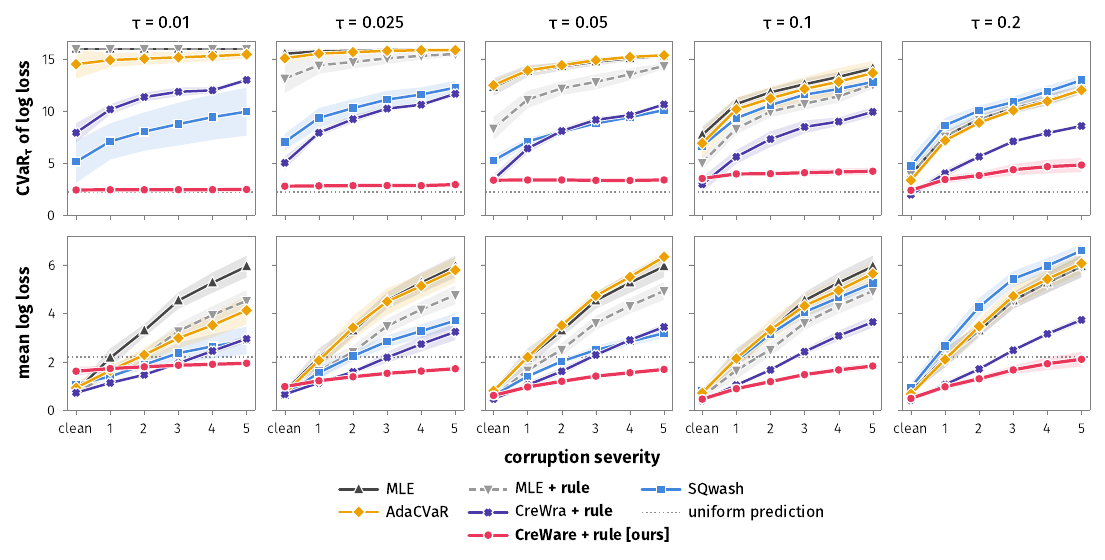}
        \caption{Risk-averse prediction under distribution shift with PathMNIST evaluated by log-loss.}
        \label{fig:shift_risk_pathmnist_log_loss}
    \end{figure}
    \begin{figure}[h]
        \centering
        \includegraphics[width=0.8\linewidth]{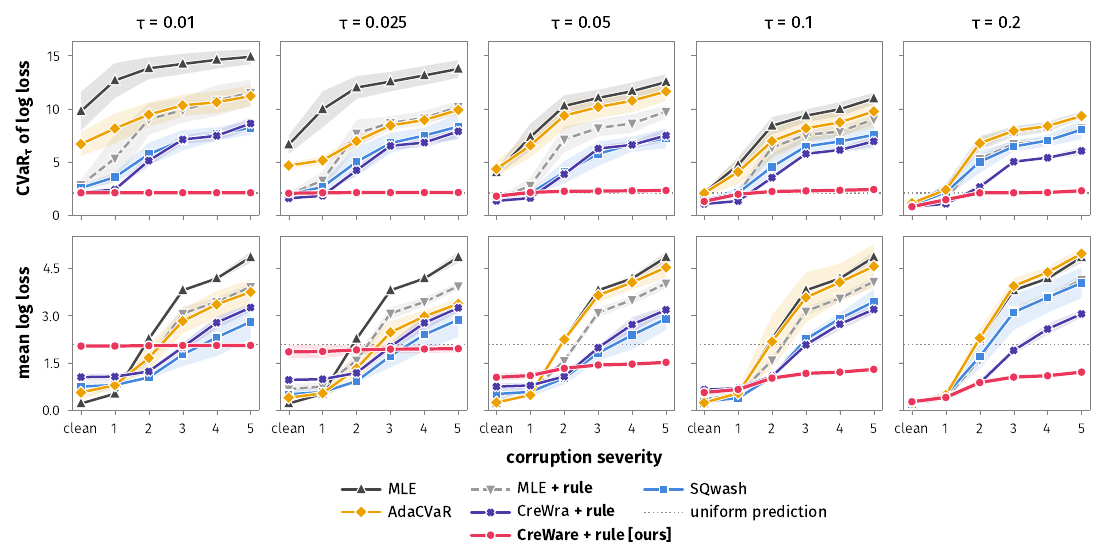}
        \caption{Risk-averse prediction under distribution shift with BloodMNIST evaluated by log-loss.}
        \label{fig:shift_risk_bloodmnist_log_loss}
    \end{figure}
    Our method outperforms or equals the risk-averse and credal baselines in terms of \ac{cvar}, while sacrificing little in terms of mean loss, and often outperforming the baselines in terms of mean loss.
    Importantly, the gap between our method and the baselines increases as the distribution shift becomes more pronounced. This is the result of our credal predictor increasing its epistemic uncertainty, and the \ac{cvar}-minimax rule responding accordingly.
    It should also be noted that, as $\tau$ increases, the difference between our method and baselines in terms of \ac{cvar} decreases, while for the mean loss it increases. This can be explained by recalling the special cases discussed in \cref{sec:risk-averse-prediction}. As $\tau$ approaches 1, the \ac{cvar} tends towards the mean loss, such that the predictions produced by our pipeline approach the ones that minimize the worst-case mean loss.
    As a result, the advantage of our method in terms of mean loss increases, while it diminishes in terms of \ac{cvar}.

    \subsubsection{Brier-loss}
    Besides the log-loss, we also evaluate our approach on another common proper scoring rule, the Brier-loss. This loss penalizes overconfident wrong predictions less extremely than the log-loss and is bounded. We provide results for the PathMNIST dataset in \cref{fig:shift_risk_pathmnist_brier}. In addition, we provide results using the BloodMNIST dataset in \cref{fig:shift_risk_bloodmnist_brier}.
    \begin{figure}[h]
        \centering
        \includegraphics[width=0.8\linewidth]{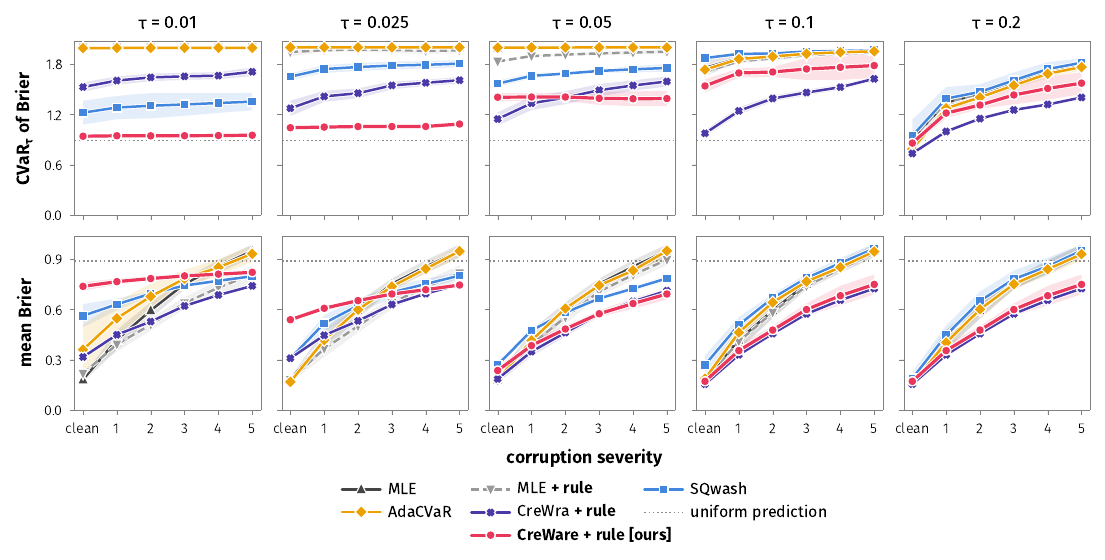}
        \caption{Risk-averse prediction under distribution shift with PathMNIST evaluated by Brier-loss.}
        \label{fig:shift_risk_pathmnist_brier}
    \end{figure}
    \begin{figure}[h]
        \centering
        \includegraphics[width=0.8\linewidth]{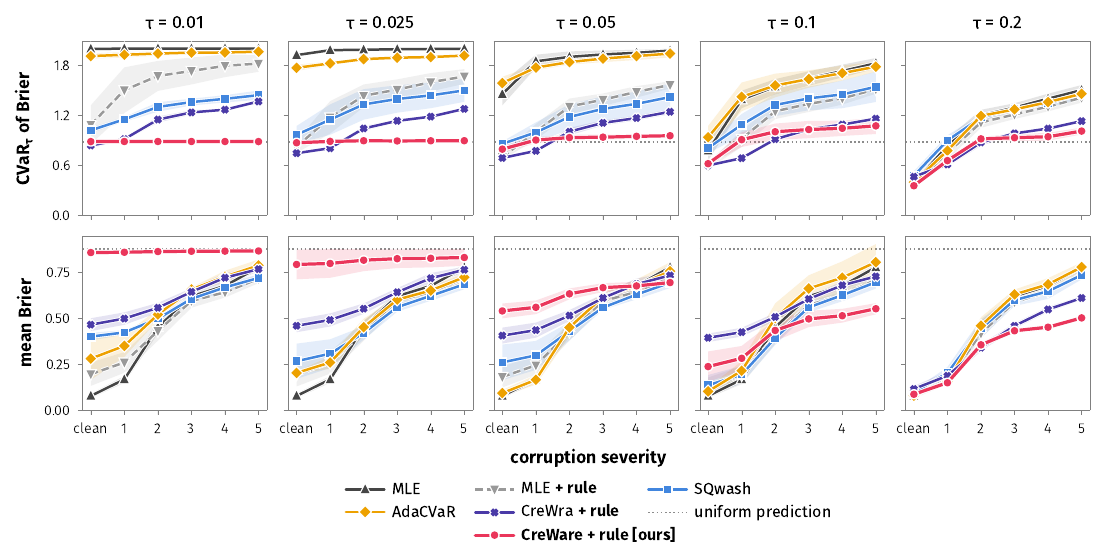}
        \caption{Risk-averse prediction under distribution shift with BloodMNIST evaluated by Brier-loss.}
        \label{fig:shift_risk_bloodmnist_brier}
    \end{figure}
    Generally speaking, the results for the Brier-loss are less extreme due to its boundedness.
    Whereas for the log-loss a single overconfident error may cause a high loss, the Brier-loss never exceeds 2 and hence cannot blow up too much.
    Consequently, unlike for the log-loss, a growing distribution shift does not result in an exceedingly large gap between our method and the baselines.

    \clearpage

    \subsubsection{Ablation on the effect of $\alpha$}\label{app:ablation-alpha-risk-aversion-shift}
    Our relative-likelihood-based credal predictor involves the parameter $\alpha$, which determines the least plausible hypothesis --- in terms of relative likelihood --- that is accepted. Consequently, a larger $\alpha$ results in smaller credal sets, and vice versa. To better understand the effect of $\alpha$, and the resulting interaction with our \ac{cvar}-minimax rule, on the performance in risk-averse prediction, we ablate it, considering the values $\alpha \in \{0.0, 0.2, 0.4, 0.6, 0.8, 0.9, 0.95, 1.0\}$. We perform the same experiment as before, using our rule in combination with the credal set --- obtained from the respective $\alpha$-cut --- induced by CreWare, and monitor the $\cvar_\tau$ of the log-loss. \cref{fig:shift_risk_alpha_pathmnist_log_loss,fig:shift_risk_alpha_bloodmnist_log_loss} show the results for the PathMNIST and BloodMNIST datasets, respectively.

    \begin{figure}[h]
        \centering
        \includegraphics[width=0.8\linewidth]{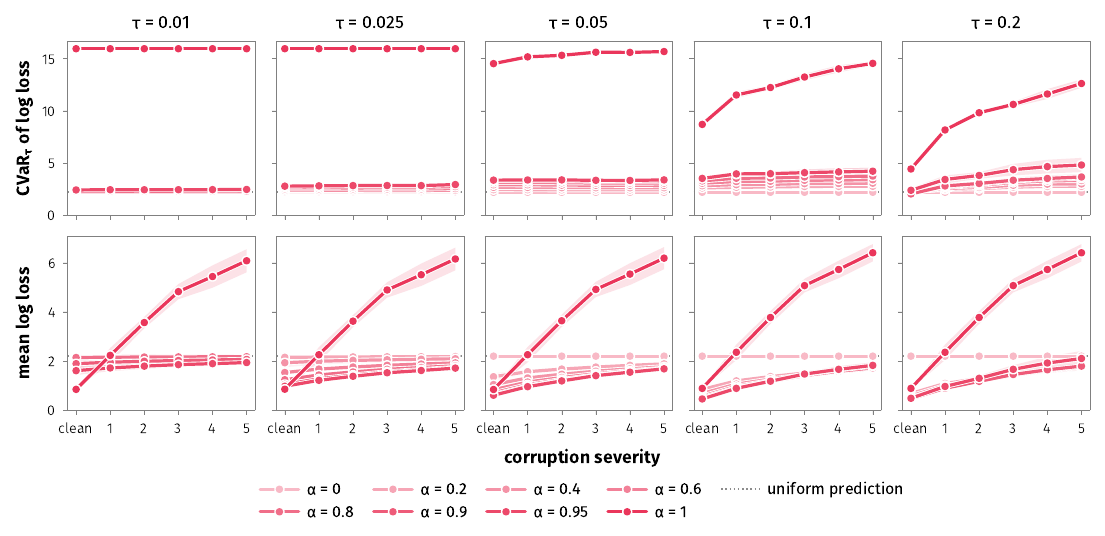}
        \caption{Risk-averse prediction under distribution shift with PathMNIST using different $\alpha$-values.}
        \label{fig:shift_risk_alpha_pathmnist_log_loss}
    \end{figure}
    \begin{figure}[h]
        \centering
        \includegraphics[width=0.8\linewidth]{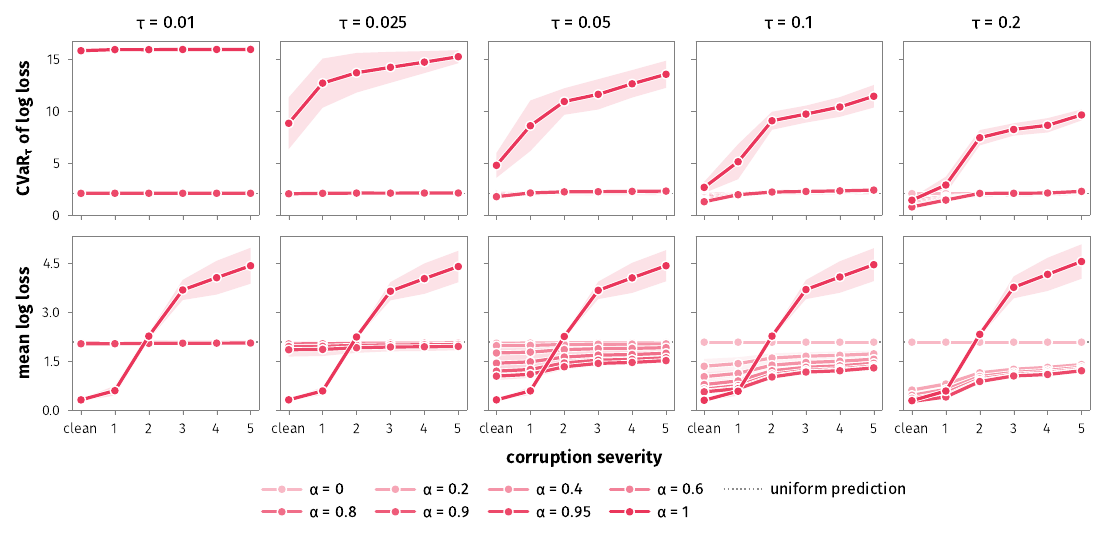}
        \caption{Risk-averse prediction under distribution shift with BloodMNIST using different $\alpha$-values.}
        \label{fig:shift_risk_alpha_bloodmnist_log_loss}
    \end{figure}

    Our method, combining relative-likelihood-based credal prediction and a \ac{cvar}-minimax decision rule, admits a number of special cases, which \cref{fig:shift_risk_alpha_pathmnist_log_loss,fig:shift_risk_alpha_bloodmnist_log_loss} illustrate well. As $\alpha \downarrow 0$, it can be seen that the predictions of the pipeline tend toward the uniform prediction. This trivial prediction strategy performs well in terms of \ac{cvar}, but, obviously, sacrifices much in terms of mean loss. On the contrary, when $\alpha = 1$, the resulting predictions reduce to the precise \ac{cvar}-optimal prediction based on the maximum likelihood estimate, which performs best in terms of mean loss on clean data, while attaining a higher \ac{cvar}. However, as the distribution shift becomes stronger, the $\alpha = 1$ predictions result in increasingly higher losses due to the MLE not being able to express its epistemic uncertainty.

    \clearpage

    \subsection{Cost-Sensitive Decision Making}\label{app:cost-sensitive-results}
    We evaluate the complete pipeline in realistic decision making, where, unlike in standard benchmarks, the cost of wrong decisions are rarely symmetric. In such cost-sensitive settings, the decision maker selects an action from a finite set, evaluated by a cost matrix over the true outcomes. The rule \cref{eq:cvar-distribution-set} carries over directly, with the cost of an action in place of the prediction loss. We simulate a triage scenario where a patient is in one of three heath conditions (healthy, flu, sepsis). The decision maker chooses between sending home, treating for flu, and ICU admission. While appropriate care costs at most $2$ and precautionary interventions $6$--$8$, sending a septic patient home costs $20$. The scenario is difficult for two reasons: some septic patients present with features resembling those of healthy patients, and septic patients are underrepresented in the training data. We compare our pipeline to the maximum likelihood estimate (MLE), SQwash and AdaCVaR, and two ablations by applying our decision rule to the MLE prediction and to credal sets of CreWra. The detailed setup is described in \cref{app:cost-sensitive}.

    \Cref{fig:cost-sensitive-app} shows the results. Our approach attains the lowest tail cost, reducing the \ac{cvar} of the MLE from $10.2$ to $6.7$ at a mean cost of $1.97$ against the MLE's $1.63$. The rule applied to the point estimate or to CreWra sets lowers the tail as well, but at roughly three times the mean cost, since nearly every patient is then admitted to the ICU. The mean cost decomposition in \cref{fig:cost-sensitive-app} shows that the mean cost of the risk-neutral baseline is dominated by catastrophic decisions (e.g., septic patients sent home), something that is minimized by our approach. Our approach acts at prediction time, hence ambiguous cases are hedged by the tail-aware rule, and the underrepresented class by the width of the credal set.

    \begin{figure}
        \centering
        \includegraphics[width=0.9\linewidth]{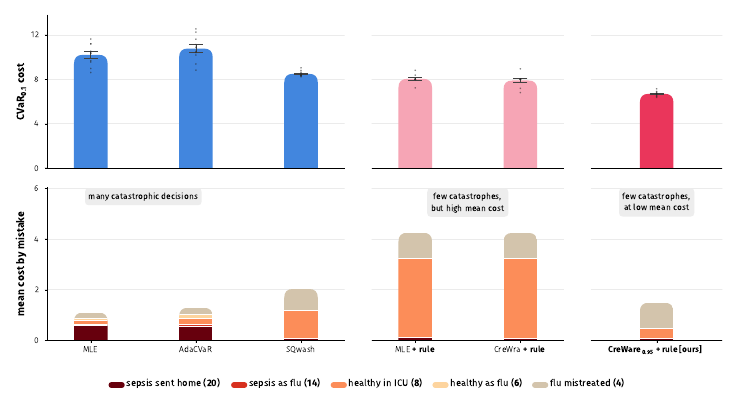}
        \caption{Cost-sensitive decision making. CreWare shows lower \ac{cvar} and avoids catastrophic decisions.}
        \label{fig:cost-sensitive-app}
    \end{figure}

    \Cref{fig:cost_sensitive_lines} varies the training set coverage of the septic cases from $0$ to $120$ cases. The MLE improves only as data accumulate, SQwash avoids catastrophes at any positive coverage but collapses onto the MLE's failure when no sepsis was observed. The point-estimate baselines hedge against worst-case outcomes as soon as some sepsis cases are represented in the data. Our approach is nearly flat throughout (\ac{cvar} $6.9$ to $6.4$): risk-averse training can only act on evidence in the data, whereas the credal set avoids catastrophic decisions also if evidence is missing.

    \Cref{fig:cost_sensitive_bars} contrasts two levels of epistemic uncertainty. With $6$ subgroup sepsis cases (the main-text setting), the ordering of \cref{fig:cost-sensitive-app} holds. With $30$ cases the MLE recovers (\ac{cvar} $7.4$), the ablations no longer retreat to blanket admission, and the arms converge (CreWra+rule $6.36$, ours $6.49$). The credal advantage is largest where evidence is scarcest and diminishes as evidence accumulates.

    \begin{figure}[h]
        \centering
        \includegraphics[width=0.95\linewidth]{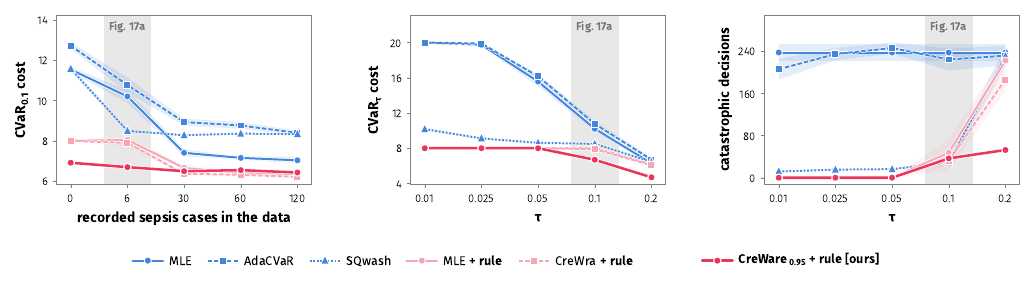}
        \caption{Ablation over the training coverage of the underrepresented subgroup ($\tau = 0.1$, $10$ seeds): our approach is nearly invariant to the missing data, while SQwash's protection collapses at $0\%$. Shaded bands show the standard deviation of the mean over ten seeds.}
        \label{fig:cost_sensitive_lines}
    \end{figure}

    \begin{figure}[h]
        \centering
        \begin{minipage}{0.49\linewidth}
            \centering
            \includegraphics[width=\linewidth]{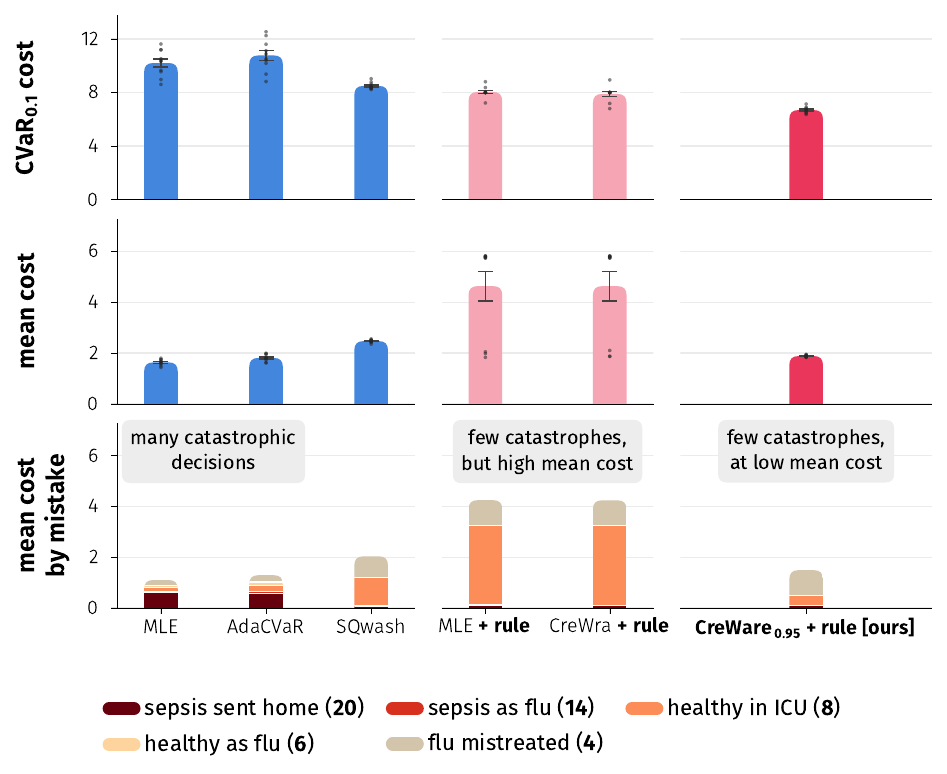}
            \\
            \textbf{(a) high epistemic uncertainty (6 sepsis cases)}
        \end{minipage}
        \hfill
        \begin{minipage}{0.49\linewidth}
            \centering
            \includegraphics[width=\linewidth]{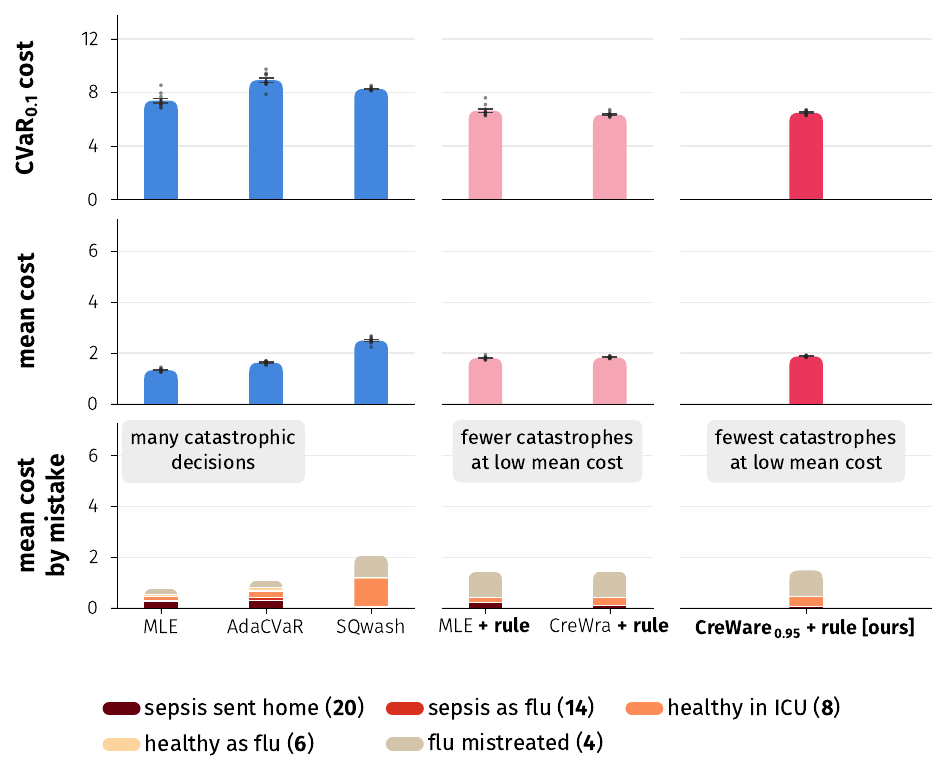}
            \\
            \textbf{(b) less epistemic uncertainty (30 sepsis cases)}
        \end{minipage}
        \caption{The comparison at high ($6$ subgroup sepsis cases, a) and lower ($30$ cases, b) epistemic uncertainty: the credal advantage narrows as evidence accumulates.}
        \label{fig:cost_sensitive_bars}
    \end{figure}

    \clearpage

    \subsection{Risk-Averse Reinforcement Learning}\label{app:rl-results}
    \Cref{fig:driving2} repeats the experiment of \cref{sec:rl-driving} on a larger map, a $100 \times 120$ grid with $8126$ free cells, an eight-directional car (diagonal moves are allowed unless they cut the corner of an obstacle), and two irregularly shaped obstacles with a pass between them. The pass is sealed by a hazard belt twelve rows deep ($329$ hazard cells) with crash probability $p = 0.012$ per cell, so that the direct route, $91$ steps through twelve hazard cells, crashes with probability $1 - (1-p)^{12} \approx 0.135$, while the shortest safe detour, around the left obstacle, takes $136$ steps. With $r_s = -1$, $r_c = -40$, and $r_g = +300$, the shortcut attains an expected return of $169.4$ against $164$ for the detour. Safe cells are observed $n = 10^{4}$ times, which keeps the worst-case crash probability of the detour at $0.025$. The hazard observations are $k \in \{1, 8, 17, 25, 33, 50, 67, 90, 167, 250, 333, 400\}$. Apart from the single observation $k = 1$, they cover approximately the same range of expected observed crashes per cell, $kp \in [0.1, 5]$, as on the small map. The route cap is $s_{\max} = 350$, and the results are over $30$ seeds with $1000$ episodes each. All other settings are as in \cref{app:rl-details}.

    The picture is the same as on the small map. MLE takes the shortcut at every $k$ and crashes in $\approx 13\%$ of the episodes. MLE\plus rule does the same up to $k = 67$, begins to detour at $k = 90$, where it still crosses the hazard in $73\%$ of the seeds, and is safe only from $k = 167$ on, once enough crashes have been observed. Credal\plus rule commits to the detour from the first observation on, with a $0\%$ crash rate at every $k$.
    \begin{figure}[h]
        \centering
        \includegraphics[width=\textwidth]{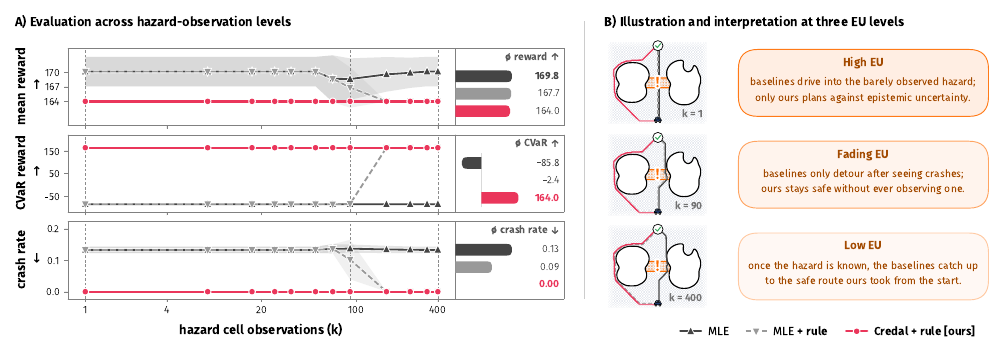}
        \caption{Offline driving task on the larger map: mean reward, CVaR reward, and crash rate vs.\ the number of hazard observations $k$ (left), with route snapshots at three values of $k$ (right) representing levels of epistemic uncertainty. Credal\plus rule (ours) is the only agent safe at every uncertainty level (robust to unobserved danger via the credal set and to known tail risk via CVaR), while MLE always takes the shortcut and the point-estimate risk-averse ablation MLE\plus rule drives into the hazard while it is under-observed.}
        \label{fig:driving2}
    \end{figure}

\end{document}